\documentclass{article}

\usepackage[accepted]{icml2026/icml2026}
\usepackage[export]{adjustbox}
\usepackage{amsmath, amssymb, amsthm, mathtools}
\usepackage{array}
\usepackage{booktabs}
\usepackage{colortbl, xcolor}
\usepackage{graphicx}
\usepackage{makecell}
\usepackage{multirow}
\usepackage{nicematrix} % for NiceTabular 
\usepackage{rotating}
\usepackage{soul}
\usepackage[caption=false]{subfig}
\usepackage{url}
\usepackage{wrapfig}
\usepackage{threeparttable}

\usepackage{xkcdcolors}
\usepackage[hidelinks]{hyperref}

\definecolor{steelblue}{HTML}{4682B4}
\hypersetup{
   pdfsubject={Proceedings of the International Conference on Machine Learning 2026},
   colorlinks=true,
   linkcolor=steelblue,
   citecolor=steelblue,
   filecolor=steelblue,
   urlcolor=steelblue,
}

\colorlet{baseline}{orange!15}
\colorlet{our}{steelblue!30}

\theoremstyle{plain}
\newtheorem{theorem}{Theorem}[section]
\newtheorem{proposition}[theorem]{Proposition}
\newtheorem{lemma}[theorem]{Lemma}

\theoremstyle{definition}
\newtheorem{definition}[theorem]{Definition}

\theoremstyle{remark}
\newtheorem{remark}[theorem]{Remark}

\newcommand{\vect}[1]{\mathbf{#1}}

\DeclareMathOperator{\sign}{sign}

\begin{document}

\twocolumn[
  \icmltitle{ParalESN: Enabling parallel information processing in Reservoir Computing}

  \icmlsetsymbol{equal}{*}

  \begin{icmlauthorlist}
    \icmlauthor{Matteo Pinna}{equal,unipi}
    \icmlauthor{Giacomo Lagomarsini}{equal,unipi}
    \icmlauthor{Andrea Ceni}{unipi}
    \icmlauthor{Claudio Gallicchio}{unipi}
  \end{icmlauthorlist}

  \icmlaffiliation{unipi}{Department of Computer Science, University of Pisa, Pisa, Italy}

  \icmlcorrespondingauthor{Matteo Pinna}{matteo.pinna@di.unipi.it}
  \icmlcorrespondingauthor{Giacomo Lagomarsini}{giacomo.lagomarsini@phd.unipi.it}

  \icmlkeywords{
    Deep Learning,
    Sequential Models,
    Recurrent Neural Networks,
    Reservoir Computing,
    Time series,
    }

  \vskip 0.3in
]

\setcounter{footnote}{2} % skip footnotes numbers used for affiliations and code

\printAffiliationsAndNotice{\icmlEqualContribution}

\begin{abstract}
Reservoir Computing (RC) has established itself as an efficient paradigm for temporal processing. However, its scalability remains severely constrained by the need to process temporal data sequentially and the prohibitive memory footprint of high-dimensional reservoirs. To address these limitations, we revisit RC through the lens of structured operators and state space modeling, introducing Parallel Echo State Network (ParalESN). Leveraging diagonal linear recurrence in the complex domain, ParalESN enables parallel processing of temporal data and the construction of efficient, high-dimensional reservoirs. A thorough theoretical analysis demonstrates that the Echo State Property and the universality guarantees of traditional Echo State Networks are preserved, while also admitting an equivalent representation of arbitrary linear reservoirs in the complex diagonal form. Empirically, ParalESN achieves competitive predictive accuracy with traditional RC and with fully trainable sequence models, while delivering computational savings by orders of magnitude. Overall, ParalESN offers a scalable and principled pathway for integrating RC within the deep learning landscape\hyperlink{code}{\textsuperscript{2}}.
\end{abstract}

\begin{figure*}[htbp]
    \centering
    \includegraphics[scale=0.77, center]{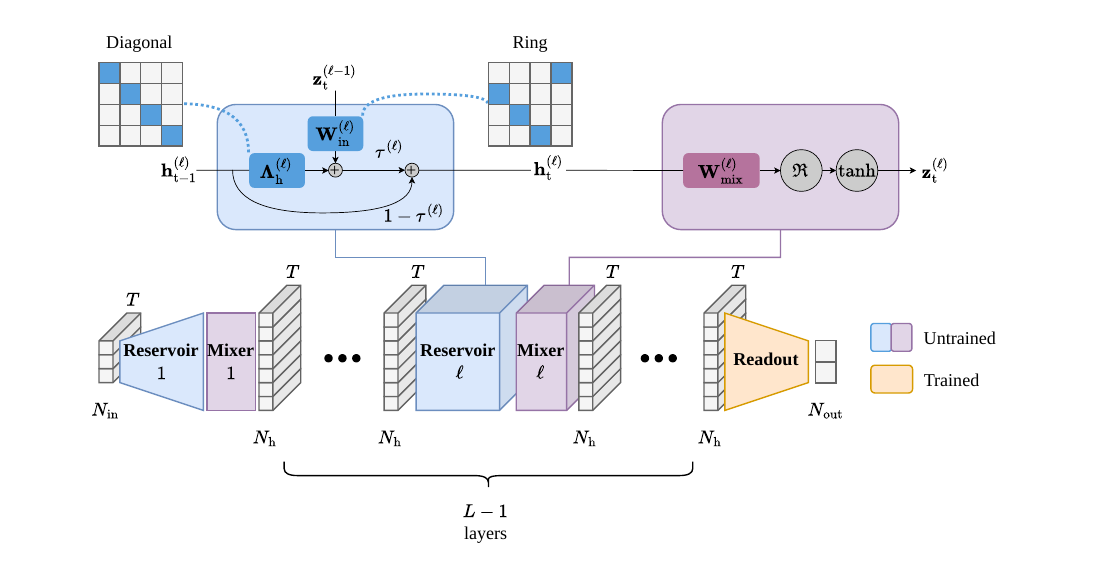}
    \caption{Architectural organization of the proposed ParalESN. The model may have multiple blocks consisting of two components: (i) a linear reservoir and (ii) a non-linear mixing layer. The first block processes the external input as in a traditional shallow architecture. Subsequent blocks process the output of the previous block's mixing layer, $\vect{z}^{(\ell - 1)}_{\text{t}}$. The structure of the reservoir (in blue) includes a diagonal, complex-valued transition matrix $\vect{\Lambda}^{(\ell)}_{\text{h}}$ and a ring, complex-valued input weight matrix $\vect{W}^{(\ell)}_{\text{in}}$. The main branch and the temporal residual connections are scaled by a positive coefficient $\tau$ and $1 - \tau$, respectively. The recurrence can be easily parallelized via associative scan. The mixing layer (in purple) is used to introduce non-linearity in the model's dynamics and to mix the components of the reservoir states at each time step. The readout (in orange) is the only trainable component. See Section~\ref{sec:paralesn} for details.}
    \label{fig:paralesn}
\end{figure*}

\section{Introduction}
\label{sec:introduction}
Reservoir Computing (RC) has emerged as a simple yet powerful paradigm for harnessing the rich dynamics of recurrent systems for learning and prediction \citep{nakajima2021reservoir,lukovsevivcius2009reservoir}. By fixing a non-linear recurrent reservoir and training only a linear readout, RC offers favorable training efficiency, strong performance on temporal processing tasks, and intriguing connections to both neuroscience and dynamical systems theory. These properties have led to its success in a wide range of domains, from speech recognition to chaotic signal prediction. Despite these strengths, RC faces the same problem as traditional, fully-trainable Recurrent Neural Networks (RNNs): the input signal has to be processed sequentially, which makes training slow and not parallelizable. Additionally, scaling reservoirs to truly high dimensions is often impractical due to the prohibitive memory footprint of dense matrices. In an attempt to improve the efficiency of RC, structured operators have been investigated \citep{rodan2011minimum, dong2020reservoir, d2025comparison}. A particularly active research area involves hardware implementations of RC \cite{gallicchio2025hardware}. An important theoretical insight is that even \emph{linear reservoirs}, provided that the readout is expressive enough, are universal approximators in the class of fading memory filters \citep{grigoryeva2018echo, grigoryeva2018universal}, thus being able to represent arbitrary input-output dynamics well.

In this work, we address the limitations of traditional RC by revisiting reservoir construction through the lens of structured operators. We introduce \emph{Parallel Echo State Networks (ParalESN)}, a novel class of efficient untrained RNNs based on diagonal linear recurrence in the complex domain, where the recurrence can be parallelized via associative scan. The proposed model is highlighted in Fig.~\ref{fig:paralesn}. By combining the dynamical richness of RC with the linear recurrence from Linear Recurrent Unit (LRU) \citep{orvieto2023resurrecting}, we bridge the gap between dynamical systems-inspired learning and contemporary large-scale sequence modeling. The remainder of this work is organized as follows. Section~\ref{sec:related_works} discusses related works. Section~\ref{sec:paralesn} introduces ParalESN. Section~\ref{sec:theoretical_analysis} details our theoretical analysis. Section~\ref{sec:experiments} presents the experiments. Section~\ref{sec:conclusions} concludes the paper. The Appendix is dedicated to computational complexity analyses, mathematical proofs and definitions, experimental methodology, and additional experiments. See Appendix~\ref{sec:appendix_roadmap} for a table of contents.

\section{Related Works}
\label{sec:related_works}

\textbf{Reservoir Computing.}
Reservoir Computing (RC) \citep{verstraeten2007experimental, nakajima2021reservoir} is a popular framework for the design of efficient untrained Recurrent Neural Networks (RNNs), developed to address the instabilities of training RNNs \citep{bengio1994gradient, glorot2010understanding, pmlr-v28-pascanu13}. An RC model consists of two main components: (i) a high-dimensional recurrent layer, called \emph{reservoir}, that is randomly initialized and then left untrained, and (ii) a trainable readout layer, that may be trained via lightweight closed-form solutions. Therefore, by design, RC models bypass backpropagation and related vanishing/exploding gradients, and can be trained exceptionally fast in a single forward pass.

Echo State Networks (ESNs) \citep{jaeger2007leakyesn} established themselves as one of the most successful instances of RC models. ESNs dynamics can be defined as follows:

\begin{equation}\label{eqn:esn_reservoir}
    \vect{h}_{\text{t}} = (1 - \tau) \vect{h}_{\text{t-1}} + \tau \,\sigma(\vect{W}_{\text{h}}\vect{h}_{\text{t-1}} + \vect{W}_{\text{in}}\vect{x}_{\text{t}} + \vect{b}).
\end{equation}
where $\vect{h}_{\text{t}} \in \mathbb{R}^{N_{\text{h}}}$ and $\vect{x}_{\text{t}} \in \mathbb{R}^{N_{\text{in}}}$ are, respectively, the state and the external input at time step $t$. The transition matrix is denoted as $\vect{W}_{\text{h}} \in \mathbb{R}^{N_{\text{h}}\times N_{\text{h}}}$, the input weight matrix is denoted as $\vect{W}_{\text{in}} \in \mathbb{R}^{N_{\text{h}}\times N_{\text{in}}}$, $\vect{b} \in \mathbb{R}^{N_{\text{h}}}$ denotes the bias vector, $\sigma$ denotes an element-wise applied non-linearity, and $\tau \in (0, 1]$ denotes the leaky rate hyperparameter. The entries of the matrix $\vect{W}_{\text{in}}$ and $\vect{b}$ are generally sampled randomly from a uniform distribution \citep{gallicchio2017deep, ceni2024residual} over $[-\omega_{\text{in}}, \omega{_\text{in}}]$ and $[-\omega_{\text{b}}, \omega{_\text{b}}]$, respectively. Sampling their entries from Gaussian distributions is also popular \citep{verstraeten2007experimental}. The entries of $\vect{W}_{\text{h}}$ are randomly sampled from a uniform distribution over $[-1, 1]$ and then rescaled to have a desired spectral radius $\rho$\footnote{The spectral radius of a matrix $\vect{A}$, denoted $\rho(\vect{A})$, is defined as the largest among the lengths of its eigenvalues.}. In practical applications, the spectral radius is generally constrained to be smaller than $1$.

The final output is retrieved through the linear readout:
\begin{equation}\label{eqn:esn_readout}
    \vect{y}_{\text{t}}  = \vect{W}_{\text{out}} \vect{h}_{\text{t}} + \vect{b}_{\text{out}}.
\end{equation}
where $\vect{y}_{\text{t}} \in \mathbb{R}^{N_{\text{out}}}$ denotes the network output at time step $t$, and $\vect{W}_{\text{out}} \in \mathbb{R}^{N_{\text{out}} \times N_{\text{h}}}$ and $\vect{b}_{\text{out}} \in \mathbb{R}^{N_{\text{out}}}$ denote the readout weight matrix and bias vector, respectively. The readout is typically optimized via lightweight closed-form solutions, e.g. ridge regression or least squares methods.

The model's state update function defined in \eqref{eqn:esn_reservoir} may depend, in principle, on the initial point \(\vect{h}_0\). This leads to non-deterministic behaviors, i.e. the impossibility to determine the state solely from the inputs. To avoid that, the reservoir in ESNs is initialized subject to the Echo State Property (ESP) \citep{yildiz2012re}, a useful stability condition for guiding ESN initialization. We say that a discrete time dynamical system defined by the transition function \(F(\vect{h}, \vect{x})\) satisfies the ESP if the states asymptotically depends only on the system's inputs. In other words, for any two initial points \(\vect{h}_0\) and \(\vect{h}'_0\), then
\begin{equation}
    \lim_{t\to \infty} \|F(\vect{h}_{\text{t-1}}, \vect{x}_{\text{t}})- F(\vect{h}'_{t-1}, \vect{x}_{\text{t}})\|_2=0.   
\end{equation}

Several ESN variants lay their foundation at the intersection of the RC and DL frameworks. In particular, Deep Echo State Networks (DeepESNs) \citep{gallicchio2017deep} generalize the concept of shallow ESNs towards deep architectural constructions, where multiple untrained reservoirs are stacked on top of each other. The increased feed-forward depth has been shown to provide advantageous architectural bias relative to shallow ESNs. More recently, Residual Echo State Networks (ResESNs) \citep{ceni2024residual} introduced orthogonal residual connections along the temporal dimension to enhance long-term information processing capabilities. Other works have explored structured transforms to improve the efficiency of RC, including Simple Cycle Reservoir (SCR)~\citep{rodan2011minimum}, where the transition matrix employs a fixed ring topology, and Structured Reservoir Computing (Structured RC)~\citep{dong2020reservoir}, where the transition matrix consists of a composition of Hadamard and diagonal matrices.

\textbf{Universality of linear reservoirs.} ESP guarantees the universality of the reservoir system in approximating fading memory filters \citep{grigoryeva2018echo}. Provided that the ESP holds, similar universality results have also been proven for reservoirs characterized by linear dynamics, when combined with non-linear readouts that universally approximate functions $f: \mathbb{C}^{N_{\text{h}}}\to \mathbb{C}^{N_y}$ \citep{grigoryeva2018universal,gonon2020reservoir}. Therefore, linear recurrence ESNs, in combination with non-linear readouts, can arbitrarily well approximate any time-invariant causal filter with  the fading memory property.

\textbf{Transformers.}
Transformers are the de-facto standard architecture for sequence modeling, managing to replace recurrent models on real-world tasks since their introduction in \cite{vaswani2017attention}. Unlike RNNs, transformers are feedforward architectures employing self-attention, enabling access to every position of the sequence at any time. This allows for great parallelization and modeling capacity, but comes at a cost of a quadratic complexity in sequence length, making scaling to long contexts challenging. To address this issue, many variants of self-attention have been proposed to lower the computational and memory burden while preserving expressivity \citep{tay2023efficient-att}.

\textbf{State Space Models.}
Besides training instabilities, another major drawback of stateful sequence models like RNNs is that the input needs to be processed sequentially, greatly limiting the parallelization capabilities of these type of architectures on modern accelerators. One of the most promising approaches for the parallelization of recurrent models is that of (Deep) State Space Models (SSMs) \citep{gu2022efficiently}. SSMs start from the idea of a linear state space dynamical system, and devise initialization and discretization strategies that help improve memory capacity of the system. Linear recurrence is easily parallelizable \citep[see e.g.][]{martin2018parallelizing}, greatly improving training and inference efficiency of this family of models. Structured SSMs such as S4 and S5 \citep{gu2022efficiently,  smith2023simplified} demonstrated how linear recurrence, along with a careful initialization based on HiPPO matrices \citep{Hippo}, can enhance long memory propagation on long sequence tasks. Building on these foundations, Mamba \citep{gu2024mamba,dao2024mamba2} proposes a selective architecture that allows to change the dynamic based on its inputs. These advances position SSMs as contenders to transformers in sequence modeling, particularly in tasks demanding long memory retention. 

\textbf{Linear recurrent unit.}
Linear recurrent Unit (LRU) \citep{orvieto2023resurrecting} successfully applied linear recurrence to general RNNs, demonstrating that it is possible to deviate from the strict initialization and parametrization rules of SSMs, while retaining their impressive performance. Rather than initializing matrices using HiPPO theory as standard SSMs, LRU employs a diagonal transition matrix, whose eigenvalues are initialized inside the unitary complex disk, using a de-coupled parametrization of their magnitude and phase. In particular, the magnitude is chosen in an interval \([r_{\min}, r_{\max}]\), which allows a finer control over stability and memory capacity of the model. The linear, diagonal structure reduces recurrent updates to parallelizable element-wise operations, and the initialization strategy makes the architecture more stable for longer sequences.

\begin{figure*}[t]
    \centering
    \includegraphics[scale=1.0, center]{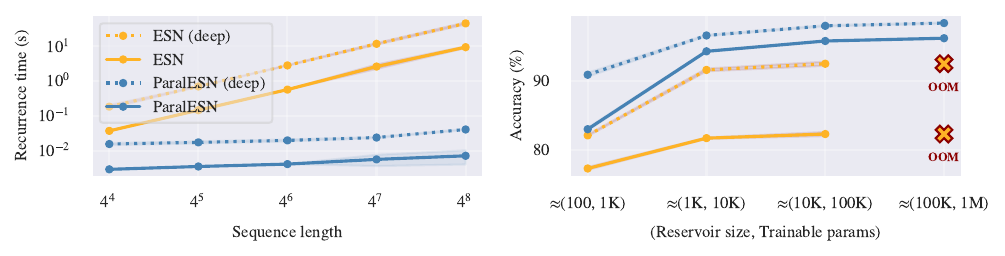}
    \caption{(\emph{left}) Time required to perform the recurrence in ParalESN and traditional ESNs for increasing sequence lengths, assuming $128$ recurrent neurons and $5$ layers for deep configurations. ParalESN scales logarithmically with sequence length, whereas traditional ESNs scale linearly. (\emph{right}) Scaling ParalESN and traditional ESNs to high-dimensional reservoirs on the sMNIST task. Traditional ESNs run out-of-memory (OOM) at approximately $100$K reservoir neurons, while ParalESN fits into memory.}
    \label{fig:scaling_comparison}
\end{figure*}

\section{Parallel Echo State Networks}
\label{sec:paralesn}
Inspired by the success of state space modeling, we propose a more scalable and efficient approach for designing reservoirs. We introduce \emph{Parallel Echo State Network (ParalESN)}, a novel class of untrained RNNs based on linear diagonal recurrence in the complex domain, similar to that of LRU. This enables parallel temporal processing via associative scan as well as a reduced memory footprint for the transition matrix. The recurrence is followed by a mixing layer that combines reservoir states, enabling interaction among the components of the hidden state which would otherwise evolve independently due to the diagonal structure of the transition matrix. As is standard in the RC paradigm, the readout layer is the only trainable component. Throughout the paper, we explore a \emph{shallow} and a \emph{deep} variant of ParalESN, which we denote ParalESN and ParalESN (deep), respectively. Fig.~\ref{fig:paralesn} illustrates the proposed architecture. Fig.~\ref{fig:scaling_comparison} illustrates ParalESN's computational advantage over traditional RC with respect to sequence length (left) and reservoir size (right). ParalESN's recurrence time scales logarithmically with sequence length, rather than linearly, due to its ability to parallelize the recurrence. Additionally, thanks to memory-efficient parameterizations of its matrices, ParalESN does not scale quadratically with hidden size, enabling higher-dimensional reservoirs \footnote{Traditional RC generally employs dense parameterizations.}. See Appendix~\ref{sec:appendix_computational_complexity} for a thorough computational complexity analysis.

Let $\vect{z}_{\text{t}}^{(0)} = \vect{x}_{\text{t}}$ denote the input to the first layer, where $\vect{x}_{\text{t}} \in \mathbb{R}^{N_{\text{in}}}$ is the external input. The reservoir dynamics at layer $\ell$ are described by:
\begin{align}
\vect{h}^{(\ell)}_{\text{t}} =& (1 - \tau^{(\ell)}) \, \vect{h}_{\text{t-1}}^{(\ell)} \nonumber \\
&+ \tau^{(\ell)} \left( \vect{\Lambda}_{\text{h}}^{(\ell)}\vect{h}^{(\ell)}_{\text{t-1}} + \vect{W}_{\text{in}}^{(\ell)} \vect{z}^{(\ell-1)}_{t} + \vect{b}^{(\ell)} \right),
\label{eq:ParalESN_reservoir}
\end{align}
where superscript $(\ell)$ denotes  layer-specific hyperparameters and weights. For each time step $t \in \{1, \ldots, T\}$, $\vect{h}_{\text{t}}^{(\ell)} \in \mathbb{C}^{N_{\text{h}}}$ is the reservoir state and $\vect{z}^{(\ell)}_{\text{t}} \in \mathbb{R}^{N_{\text{h}}}$ is the hidden state after the mixing step. The matrix $\vect{\Lambda}_{\text{h}}^{(\ell)} \in \mathbb{C}^{N_{\text{h}} \times N_{\text{h}}}$ is a diagonal transition matrix, $\vect{b}^{(\ell)} \in \mathbb{C}^{N_{\text{h}}}$ is the bias vector, and $\tau^{(\ell)} \in (0, 1]$ is the leaky rate. Since the recurrence is linear, the leaky integration can be partially absorbed into the transition matrix. Accounting for leakage, the effective transition matrix becomes $\vect{\bar{\Lambda}}_{\text{h}}^{(\ell)} = (1 - \tau^{(\ell)})\vect{I} + \tau^{(\ell)} \vect{\Lambda^{(\ell)}}_{\text{h}}$, where $\vect{I}$ is the identity matrix. The input weight matrix $\vect{W}_{\text{in}}^{(1)} \in \mathbb{C}^{N_{\text{h}} \times N_{\text{in}}}$ is dense for the first layer, mapping the external input to the hidden dimension. For subsequent layers, the input weight matrix $\vect{W}_{\text{in}}^{(\ell > 1)} \in \mathbb{C}^{N_{\text{h}} \times N_{\text{h}}}$ employs the following ring topology \cite{rodan2011minimum, verzelli2020input, tino2020dynamical, pinna2025residual} to reduce memory overhead:
\begin{equation}
    \vect{W}_{\text{in}}^{(\ell > 1)} = 
    \begin{bmatrix}
        0 & 0 & \cdots & 0 & w_{1} \\
        w_{2} & 0 & \cdots & 0 & 0 \\
        0 & w_{3} & \cdots & 0 & 0 \\
        \vdots & \vdots & \ddots & \vdots & \vdots \\
        0 & 0 & \cdots & w_{N_{\text{h}}} & 0
    \end{bmatrix},
\label{eq:ring_topology}
\end{equation}
The matrix in \eqref{eq:ring_topology} shifts the input vector and then applies an element-wise scaling. Consequently, we only need to store a $N_{\text{h}}$-dimensional vector of scaling coefficients, significantly reducing the memory footprint in deeper layers. The entries of the input weight matrices are initialized by sampling the real and imaginary parts independently from a uniform distribution over $[-1, 1]$ and then scaling each row $i$ of the matrix by $\sqrt{1 - |\lambda_{\text{i}}|^2}$, where $\lambda_{\text{i}}$ is the corresponding diagonal element (eigenvalue) of the transition matrix $\vect{\bar{\Lambda}}_{\text{h}}^{(\ell)}$. The bias vectors are sampled from a uniform distribution and scaled by hyperparameter $\omega^{(\ell)}_{\text{b}}$. The diagonal elements of the transition matrix are initialized to control the eigenvalue distribution, with spectral radii sampled uniformly from $[\rho^{(\ell)}_{\text{min}}, \rho^{(\ell)}_{\text{max}}]$ and phases from $[\theta^{(\ell)}_{\text{min}}, \theta^{(\ell)}_{\text{max}}]$, forming complex eigenvalues $\rho^{(\ell)} e^{i\theta^{(\ell)}}$.

The mixing function $f^{(\ell)}_{\text{mix}}$ is defined as:
\begin{equation}
    \vect{z}^{(\ell)}_{\text{t}} = f^{(\ell)}_{\text{mix}}(\vect{h}^{(\ell)}_{t}) = \tanh \left( \Re\left(\vect{W}_{\text{mix}}^{(\ell)} * \vect{h}^{(\ell)}_{t} + \vect{b}_{\text{mix}}^{(\ell)}\right) \right),
\label{eq:ParalESN_mixing}
\end{equation}
where $*$ denotes the convolution operator, $\vect{W}_{\text{mix}}^{(\ell)} \in \mathbb{C}^{k}$ is a 1-D kernel of size $k^{(\ell)}$, $\vect{b}_{\text{mix}}^{(\ell)} \in \mathbb{C}$ is a scalar bias, and $\Re(\cdot)$ extracts the real part. The kernel slides across the hidden dimension of $\vect{h}^{(\ell)}_{t} \in \mathbb{C}^{N_\text{h}}$ with same-padding, producing an output of identical dimension. We employ a 1-D convolution rather than a dense matrix to reduce the memory footprint of the mixing function. The same kernel is shared across all time steps, requiring only $k + 1$ parameters regardless of sequence length or hidden size. The kernel and bias entries are sampled randomly from a uniform distribution over [-$\omega^{(\ell)}_{\text{mix}}, \omega^{(\ell)}_{\text{mix}}]$ and [-$\omega^{(\ell)}_{\text{mixb}}, \omega^{(\ell)}_{\text{mixb}}]$, respectively.

The readout aggregates the mixed states from all $L$ layers:
\begin{equation}
    \vect{y}_{\text{t}} = f_{\text{readout}}(\vect{z}_{\text{t}}^{(1)}, \ldots, \vect{z}^{(L)}_{\text{t}}).
\label{eq:ParalESN_readout}
\end{equation}
For classification tasks, only the final time step is used, $\vect{y} = f_{\text{readout}}(\vect{z}_{\text{t}}^{(1)}, \ldots, \vect{z}^{(L)}_{\text{t}})$.

\section{Theoretical Analysis}
\label{sec:theoretical_analysis}
Here, we derive a simple condition for the ESP to hold, which is necessary to prove a universality result for the class of linear reservoir models \citep{grigoryeva2018echo}. For simplicity, we consider a one-layer ParalESN. The sequence of hidden states produced by the model, given a sequence of inputs
\(\{\vect{x}_1,\ldots, \vect{x}_{\text{t}}\}\in (\mathbb{C}^{N_{\text{in}}})^T\) and a starting point \(\vect{h}_0\in \mathbb{C}^{N_{\text{h}}}\), is
$(\vect{h}_{0},...,\vect{h}_{T})$, with

\begin{equation}\label{eqn:diag_esn_reservoir}
    \vect{h}_{t} = 
    \begin{cases}
        \vect{h}_0 & \text{if } t=0 \\ 
        \bar{\vect{\Lambda}}_{\text{h}}\vect{h}_{\text{t-1}} + \tau(\vect{W}_{\text{in}} \vect{x}_{t} + \vect{b}) & \text{otherwise} \\
    \end{cases}
\end{equation}
\begin{equation}\label{eqn:diag_esn_readout_general}
    \vect{y}_{\text{t}} = f_{\text{out}}(\vect{h}_{\text{t}}),
\end{equation}
where $f_{\text{out}}$ is the readout function. Because the recurrence is linear, the readout function must be non-linear to preserve expressivity. This definition differs slightly from equations \eqref{eq:ParalESN_reservoir},  \eqref{eq:ParalESN_mixing}, and \eqref{eq:ParalESN_readout}, but since training does not come into play in this section, we can incorporate \(f_{\text{mix}}\) and \(f_{\text{readout}}\) into a single function \(f_{\text{out}}\).

\subsection{Echo State Property}
We derive a simple condition for the ESP to hold in the case of diagonal linear ESNs as in \eqref{eqn:diag_esn_reservoir}. The same condition on the spectral radius that is necessary for the ESP of ESNs (see, e.g., \cite{jaeger2004harnessing}) is also sufficient in our case. Moreover, in the case of a diagonal transition matrix, we can directly control the spectral radius by the largest diagonal element in absolute value.

\begin{theorem}[Sufficient and necessary conditions for the ESP]\label{theo:esp}
A ParalESN has the ESP if and only if the diagonal elements \(\lambda_1,...,\lambda_{N_{\text{h}}}\) of the transition matrix \(\bar{\vect{\Lambda}}_{\text{h}}\)  are such that for each $i$, $|\lambda_{\text{i}}| <1$, where \(|\cdot | \) is the complex modulus. 
\end{theorem}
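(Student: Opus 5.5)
The plan is to exploit linearity and reduce the Echo State Property to the behaviour of powers of the diagonal matrix $\bar{\vect{\Lambda}}_{\text{h}}$. Fix an input sequence $(\vect{x}_t)$ and two initial states $\vect{h}_0,\vect{h}'_0$, and let $\vect{h}_t,\vect{h}'_t$ be the trajectories generated by \eqref{eqn:diag_esn_reservoir}. The input term $\tau(\vect{W}_{\text{in}}\vect{x}_t+\vect{b})$ is identical for both trajectories, so it cancels in the difference $\vect{e}_t=\vect{h}_t-\vect{h}'_t$. This gives the homogeneous recursion $\vect{e}_t=\bar{\vect{\Lambda}}_{\text{h}}\vect{e}_{t-1}$, and hence
\[
\vect{e}_t=\bar{\vect{\Lambda}}_{\text{h}}^{\,t}(\vect{h}_0-\vect{h}'_0).
\]
Because $\bar{\vect{\Lambda}}_{\text{h}}$ is diagonal, the $i$-th component is $(\vect{e}_t)_i=\lambda_i^{t}(\vect{h}_0-\vect{h}'_0)_i$, so
\[
\|\vect{e}_t\|_2^2=\sum_{i=1}^{N_{\text{h}}}|\lambda_i|^{2t}\,|(\vect{h}_0-\vect{h}'_0)_i|^2 .
\]
The ESP condition $\|F(\vect{h}_{t-1},\vect{x}_t)-F(\vect{h}'_{t-1},\vect{x}_t)\|_2\to 0$ is exactly $\|\vect{e}_t\|_2\to 0$. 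Note that this difference does not depend on the inputs at all, so the statement will hold uniformly over input sequences.

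\emph{Sufficiency.} Assume $|\lambda_i|<1$ for every $i$, and set $\rho=\max_i|\lambda_i|<1$. Then
\[
\|\vect{e}_t\|_2\le \rho^{t}\,\|\vect{h}_0-\vect{h}'_0\|_2\to 0,
\]
so the ESP holds, with exponential convergence.

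\emph{Necessity}, by contrapositive. Suppose $|\lambda_j|\ge 1$ for some $j$. Choose $\vect{h}'_0=\vect{h}_0+\vect{u}_j$, where $\vect{u}_j$ is the $j$-th canonical basis vector, and take any input sequence. Then
\[
\|\vect{e}_t\|_2=|\lambda_j|^{t}\ge 1 \quad\text{for all } t,
\]
so the difference does not vanish and the ESP fails. It is enough to exhibit one input sequence and one pair of initial states, since the ESP quantifies over all of them.

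I expect the main obstacle to be definitional rather than technical. The paper states the ESP only informally, as an asymptotic limit over arbitrary initial points, rather than in the Grigoryeva--Ortega form as existence and uniqueness of a solution for bi-infinite inputs. With the limit-based definition the argument above is complete. If the uniqueness-of-solutions form is intended instead, I would add two short arguments. For existence under $\rho<1$, I would show that the series $\sum_{k\ge 0}\bar{\vect{\Lambda}}_{\text{h}}^{\,k}\tau(\vect{W}_{\text{in}}\vect{x}_{t-k}+\vect{b})$ converges for bounded inputs. For failure of uniqueness when $|\lambda_j|=1$, I would note that adding $\lambda_j^{t}\vect{u}_j$ to any solution again gives a bounded solution.
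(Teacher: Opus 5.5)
Your proposal is correct and follows essentially the same route as the paper: the shared input term cancels, so the state difference equals $\bar{\vect{\Lambda}}_{\text{h}}^{\,t}(\vect{h}_0-\vect{h}'_0)$. Sufficiency then follows from the contraction bound with $\max_i|\lambda_i|<1$ (the paper phrases this via $\|\bar{\vect{\Lambda}}_{\text{h}}\|_2=\rho(\bar{\vect{\Lambda}}_{\text{h}})$ for a diagonal matrix), and necessity from choosing initial states that differ only in a coordinate $j$ with $|\lambda_j|\ge 1$.

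Your closing remark on the existence--uniqueness formulation goes beyond what is needed, since the paper works only with the limit-based definition of the ESP.
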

The proof is given in Appendix~\ref{sec:appendix_proof_theorem}.

\subsection{Expressivity of ParalESN}
An ESN with linear recurrence, MLP readout, and the ESP is universal in the family of fading memory filters, and in particular, it is as expressive as non-linear ESNs that satisfy the same property \citep{grigoryeva2018echo,grigoryeva2018universal}. See Appendix~\ref{sec:appendix_definitions} for definitions on fading memory filters. Having established the conditions for ParalESN to have the ESP in Theorem~\ref{theo:esp}, we now show that ParalESN is as expressive as an ESN with linear recurrence and any arbitrary transition matrix $\textbf{W}_{\text{h}}\in \mathbb{C}^{N_{\text{h}}\times N_{\text{h}}}$ and MLP readout.
\begin{proposition} \label{prop:equivalence}
Let \(\vect{W}_\mathrm{h}\) be a matrix with independent and identically distributed entries {\(w_{i,j}~\sim~p(w)\)}, where \(p(w)\) is a density [...]
Consider an ESN with linear recurrence and a 1-layer MLP readout, defined by
\begin{equation}
\begin{cases}
     \vect{h}_{\text{t}} = \vect{W}_{\mathrm{h}}\vect{h}_{\text{t-1}} + \vect{W}_{\mathrm{in}}\vect{x}_{\text{t}} \\
     \vect{y}_{\text{t}} = \vect{W}_{\mathrm{out}} (\sigma (\vect{W}_{\mathrm{h}} \vect{h}_{\text{t}})) = MLP(\vect{h}_{\text{t}})  
\end{cases}
\end{equation} 

Then, with probability 1, there exists a ParalESN with MLP readout, defined by \eqref{eqn:diag_esn_reservoir} and \eqref{eqn:diag_esn_readout_general} with \(f_{\mathrm{out}}\) being an MLP, such that for any given input, the two models produce the same output.
\end{proposition}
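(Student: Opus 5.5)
The strategy is to diagonalize the linear recurrence by a change of basis and absorb that change of basis into the reservoir's input weights and the first layer of the MLP readout. The probabilistic hypothesis is used in one place only: to show that \(\vect{W}_\mathrm{h}\) is diagonalizable over \(\mathbb{C}\) with probability 1.

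\textbf{Step 1 (almost-sure diagonalizability).} A matrix with distinct eigenvalues is diagonalizable. The set of matrices with a repeated eigenvalue is the zero set of the discriminant of the characteristic polynomial, \(\mathrm{disc}(\chi_{\vect{W}_\mathrm{h}})\), which is a polynomial in the entries \(w_{i,j}\). This polynomial is not identically zero, since, for example, \(\mathrm{diag}(1,2,\ldots,N_\mathrm{h})\) has distinct eigenvalues. The zero set of a nonzero polynomial on \(\mathbb{R}^{N_\mathrm{h}^2}\) (or \(\mathbb{C}^{N_\mathrm{h}^2}\), viewed as real space) has Lebesgue measure zero. Because the entries are i.i.d. with a density, the joint law is absolutely continuous with respect to Lebesgue measure. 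Hence, with probability 1, \(\vect{W}_\mathrm{h}=\vect{P}\vect{\Lambda}\vect{P}^{-1}\) with \(\vect{\Lambda}\) diagonal and \(\vect{P}\) invertible. Proving that the zero set of a nonzero polynomial is null is standard, by induction on the number of variables with Fubini. I expect this step to be the only real obstacle, and it is mild.

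\textbf{Step 2 (conjugating the recurrence).} Define \(\tilde{\vect{h}}_t=\vect{P}^{-1}\vect{h}_t\). Then
\(\tilde{\vect{h}}_t=\vect{\Lambda}\tilde{\vect{h}}_{t-1}+\vect{P}^{-1}\vect{W}_\mathrm{in}\vect{x}_t\),
with initial state \(\tilde{\vect{h}}_0=\vect{P}^{-1}\vect{h}_0\). To match \eqref{eqn:diag_esn_reservoir}, set \(\vect{b}=0\) and take \(\tau=1\), so that \(\bar{\vect{\Lambda}}_\mathrm{h}=\vect{\Lambda}_\mathrm{h}=\vect{\Lambda}\). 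Set the ParalESN input matrix to \(\vect{P}^{-1}\vect{W}_\mathrm{in}\). Alternatively, for any \(\tau\in(0,1]\) one can choose \(\vect{\Lambda}_\mathrm{h}=(\vect{\Lambda}-(1-\tau)\vect{I})/\tau\) and input matrix \(\vect{P}^{-1}\vect{W}_\mathrm{in}/\tau\). By induction on \(t\), the ParalESN state equals \(\vect{P}^{-1}\vect{h}_t\) for every input sequence. If the ESP is relevant, note that the eigenvalues are unchanged by conjugation, so by Theorem~\ref{theo:esp} the diagonal model has the ESP exactly when the original linear ESN does.

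\textbf{Step 3 (readout).} The original output is \(\vect{y}_t=\vect{W}_\mathrm{out}\sigma(\vect{W}_\mathrm{h}\vect{h}_t)\). Here the inner matrix plays the role of the MLP's first-layer weights; the argument works for any first-layer matrix \(\vect{A}\) in place of \(\vect{W}_\mathrm{h}\). Define the ParalESN readout
\(f_\mathrm{out}(\tilde{\vect{h}})=\vect{W}_\mathrm{out}\sigma(\vect{A}\vect{P}\tilde{\vect{h}})\).
This is again a 1-hidden-layer MLP, with first-layer weights \(\vect{A}\vect{P}=\vect{P}\vect{\Lambda}\) when \(\vect{A}=\vect{W}_\mathrm{h}\). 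Substituting \(\tilde{\vect{h}}_t=\vect{P}^{-1}\vect{h}_t\) gives \(f_\mathrm{out}(\tilde{\vect{h}}_t)=\vect{y}_t\) for every input and every time step, which proves the proposition.
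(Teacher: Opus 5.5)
Your proof is correct and follows the paper's argument. You diagonalize $\vect{W}_\mathrm{h}=\vect{V}\vect{\Lambda}_\mathrm{h}\vect{V}^{-1}$, pass to $\tilde{\vect{h}}_t=\vect{V}^{-1}\vect{h}_t$, and absorb the change of basis into the input matrix $\vect{V}^{-1}\vect{W}_\mathrm{in}$ and the readout's first-layer weights $\vect{W}_\mathrm{h}\vect{V}$, exactly as the paper does; your discriminant/measure-zero argument for almost-sure diagonalizability and your explicit choice of $\tau$ and $\vect{b}$ to match \eqref{eqn:diag_esn_reservoir} fill in details that the paper only asserts or leaves implicit.
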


The proposition can be proven by diagonalizing \(\vect{W}_{\text{h}}\). The proof is given in Appendix~\ref{sec:appendix_proof_proposition}.

By Proposition \ref{prop:equivalence} and prior results on equivalent expressiveness between ESNs with linear recurrence and standard ESNs, we can deduce that ParalESN and standard ESNs (both satisfying the ESP) are \emph{equivalently expressive}. We state this formally in the following theorem.
\begin{theorem}\label{theo:universality}
The class of ParalESN models with the ESP, endowed with an MLP readout, is universal in the family of fading memory filters.
\end{theorem}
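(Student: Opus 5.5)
The plan is to reduce Theorem~\ref{theo:universality} to the known universality of linear-recurrence reservoir systems with ESP and a universal readout \citep{grigoryeva2018echo,grigoryeva2018universal,gonon2020reservoir}. We transfer that result to the diagonal complex form using Proposition~\ref{prop:equivalence} and Theorem~\ref{theo:esp}.

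\textbf{Step 1: approximation by a linear reservoir.} Fix a causal, time-invariant fading memory filter $U$ on uniformly bounded input sequences, and fix $\varepsilon>0$. By the cited results, there is a linear reservoir system
\[
\vect{h}_t=\vect{W}_{\text{h}}\vect{h}_{t-1}+\vect{W}_{\text{in}}\vect{x}_t
\]
with $\rho(\vect{W}_{\text{h}})<1$ (hence with the ESP), together with an MLP readout, whose induced filter $U_{\text{lin}}$ satisfies $\|U-U_{\text{lin}}\|_\infty<\varepsilon$.

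\textbf{Step 2: perturb to a diagonalizable matrix.} Proposition~\ref{prop:equivalence} only applies to matrices that are diagonalizable (almost surely, e.g.\ with distinct eigenvalues). I would therefore perturb $\vect{W}_{\text{h}}$ to $\vect{W}'_{\text{h}}=\vect{W}_{\text{h}}+\delta\vect{E}$, choosing $\vect{E}$ so that the eigenvalues become distinct, which holds for generic $\vect{E}$. For small $\delta$ we still have $\rho(\vect{W}'_{\text{h}})<1$ by continuity of eigenvalues.

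I then need the perturbed filter to stay within $\varepsilon$. On uniformly bounded inputs, the state is
\[
\vect{h}_t=\sum_{k\ge 0}(\vect{W}'_{\text{h}})^k\vect{W}_{\text{in}}\vect{x}_{t-k}.
\]
The series converges uniformly because $\|(\vect{W}'_{\text{h}})^k\|\le C r^k$ for some $r<1$, with $C,r$ uniform in small $\delta$. Consequently the states lie in a compact set and depend continuously, uniformly over inputs, on $\delta$. Since the MLP is uniformly continuous on that compact set, the perturbed system is $\varepsilon$-close to the original for $\delta$ small.

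This perturbation step, in particular getting the constant $C$ uniform in $\delta$, is the main obstacle. I would handle it with Gelfand's formula applied to a fixed power, or with an adapted norm in which $\|\vect{W}_{\text{h}}\|<1$, which then persists under small perturbations.

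\textbf{Step 3: diagonalize.} Write $\vect{W}'_{\text{h}}=\vect{P}\vect{\Lambda}\vect{P}^{-1}$ and set $\tilde{\vect{h}}_t=\vect{P}^{-1}\vect{h}_t$. Then
\[
\tilde{\vect{h}}_t=\vect{\Lambda}\tilde{\vect{h}}_{t-1}+\vect{P}^{-1}\vect{W}_{\text{in}}\vect{x}_t ,
\]
which is exactly the argument behind Proposition~\ref{prop:equivalence}. Matching \eqref{eqn:diag_esn_reservoir} requires $\bar{\vect{\Lambda}}_{\text{h}}=\vect{\Lambda}$, $\tau\vect{W}_{\text{in}}^{\text{Paral}}=\vect{P}^{-1}\vect{W}_{\text{in}}$ and $\vect{b}=0$. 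Since $\bar{\vect{\Lambda}}_{\text{h}}=(1-\tau)\vect{I}+\tau\vect{\Lambda}_{\text{h}}$, taking $\tau=1$ gives this directly. The readout $\tilde{\vect{h}}\mapsto \vect{W}_{\text{out}}\sigma(\vect{A}\vect{P}\tilde{\vect{h}})$ is again a one-hidden-layer MLP, with first-layer weights absorbing $\vect{P}$; if real-valued features are needed, I would split real and imaginary parts.

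\textbf{Step 4: conclude.} All $|\lambda_i|<1$, so by Theorem~\ref{theo:esp} this ParalESN has the ESP. Its filter coincides with that of the perturbed linear system, which is $2\varepsilon$-close to $U$. Since $\varepsilon$ was arbitrary, the class of ParalESN models with the ESP and an MLP readout is dense in the fading memory filters, which is Theorem~\ref{theo:universality}.
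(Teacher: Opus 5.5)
Your proposal is correct and follows essentially the paper's route: invoke universality of linear ESNs, perturb $\vect{W}_{\text{h}}$ to a nearby diagonalizable matrix while keeping the induced filter close (the paper's Lemma~\ref{lemma:universality}, via a telescoping bound on $\vect{W}_{\text{h}}^k-(\vect{W}^{\delta}_{\text{h}})^k$), then diagonalize as in Proposition~\ref{prop:equivalence}, with the ESP supplied by Theorem~\ref{theo:esp}. Your version is in fact slightly more careful than the paper's: the adapted-norm argument makes $C$ and $r$ uniform in $\delta$, whereas the paper's choice $C=\max\{C(\vect{W}_{\text{h}}),C(\vect{W}^{\delta}_{\text{h}})\}$ silently depends on $\delta$; you also explicitly pass from closeness of states to closeness of outputs through uniform continuity of the MLP on a compact state set.
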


In practice, in our experiments, the MLP readout is often decomposed into two layers: the first layer, together with the non-linearity, is treated as the fixed mixing function, and the last layer is treated as the trainable readout.

\begin{figure*}[tb]
\centering
\begin{minipage}[t]{\textwidth}    
    \begin{minipage}[c]{0.5\textwidth}
        \centering
        \captionsetup[subfloat]{width=\linewidth}
        \subfloat{\includegraphics[scale=1.0, left]{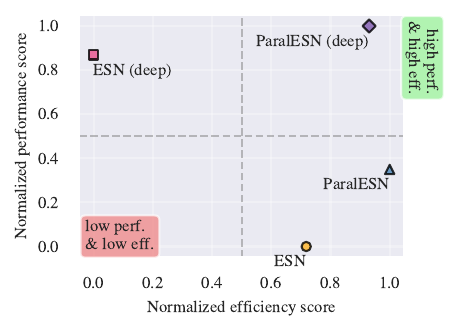}
        \label{fig:tradeoff_rc}}
    \end{minipage}\hfill
    \begin{minipage}[c]{0.5\textwidth}
        \centering
        \captionsetup[subfloat]{width=\linewidth}
        \subfloat{\includegraphics[scale=0.9, right]{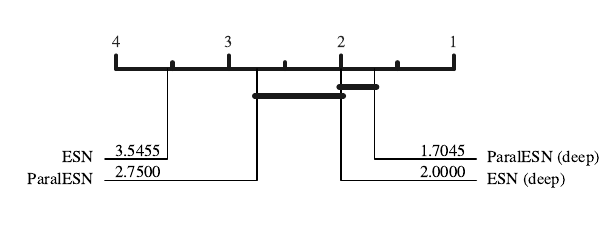}
        \label{fig:critical_difference}}
    \end{minipage}\hfill
\end{minipage}
\caption{(\emph{left}) Analysis of the trade-off between predictive performance (error for regression tasks and accuracy for classification tasks) and computational efficiency (training time) for ParalESN and traditional ESNs across all considered benchmarks. For each model, we compute the percentage improvement over the ESN baseline for each task. The normalized scores are then obtained via min-max normalization of the average improvements, mapping them to a $[0, 1]$ scale, where $0$ corresponds to the worst-performing model and $1$ to the best-performing one. Overall, ParalESN and ParalESN (deep) outperform their counterparts while being more efficient. (\emph{right}) Critical difference diagram computed via a Wilcoxon test \citep{demvsar2006statistical}, showing the average rank (lower is better). Models are ranked based on their overall performance across all benchmarks. Cliques (solid lines) connect models for which there is no statistically significant difference in performance. On average, ParalESN (deep) is the top-performing model.}
\label{fig:results_rc}
\end{figure*}

\section{Experiments}
\label{sec:experiments}
We empirically evaluate the proposed approach on time series regression and classification tasks (see also Appendix~\ref{sec:appendix_datasets}) against traditional RC and fully-trainable sequence models (see also Appendix~\ref{sec:appendix_sequence_models}). See Appendix~\ref{sec:appendix_experimental_setting} for details on the experimental setting and model selection, and Appendix~\ref{sec:appendix_additional_experiments} for additional experiments, including hyperparameter sensitivity analyses, ablations, and benchmarks on Long Range Arena \cite{tay2021long}.

\textbf{Comparison with respect to traditional RC.} Fig.~\ref{fig:results_rc} compares ParalESN with respect to traditional RC from predictive performance and computational efficiency perspectives. The left panel presents the trade-off between predictive performance and computational efficiency; the right panel presents a critical difference diagram ranking models based on their overall performance. We observe that ParalESN and ParalESN (deep) are both more accurate and more efficient than their counterparts. In particular, ParalESN (deep), despite consisting of multiple reservoir layers, remains competitive in terms of recurrence speed with a single layer ESN. Finally, ParalESN outperforms its shallow counterpart by a statistically significant margin. Although there is no statistically significant difference between ParalESN (deep) and ESN (deep) in terms of performance, the former is the top-performing model while being considerably more efficient.

\subsection{Time Series Regression}
\label{sec:regression}
Memory-based tasks are designed to assess the ability to recall delayed versions of the input. We consider MemCap \citep{jaeger2002short}, ctXOR \citep{verstraeten2010memory}, and SinMem \citep{inubushi2017reservoir}. Forecasting tasks evaluate the model's ability to predict future time steps. We consider Lorenz96 \citep{lorenz1996predictability}, Mackey-Glass (MG) \citep{jaeger2004harnessing}, NARMA, and a selection of real-world time series from \cite{zhou2021informer}, including ETTh1/2 and ETTm1/2.

\textbf{Discussion.}
Table~\ref{tab:memory_results} and Table~\ref{tab:forecasting_results} present the test set results on memory-based and forecasting tasks, respectively. Fig.~\ref{fig:memory_forecasting_time} compares the training time of ParalESN to that of traditional ESNs. Our experiments demonstrate that ParalESN achieves results comparable to traditional RC across a wide range of time series regression benchmarks, while offering substantial advantages in computational efficiency\footnote{In PyTorch (used in our experiments), complex-valued arithmetic is generally not yet as optimized as its real-valued counterpart. The computational efficiency advantage of ParalESN is therefore measured under conservative conditions: with more optimized complex arithmetic libraries, this advantage would likely be even greater.} Across all benchmarks, ParalESN trains an entire order of magnitude faster, except for Lorenz25 and Lorenz50, where the relatively small sequence length reduces the benefit of parallelizing the recurrence. Observe that even ParalESN (deep), despite consisting of multiple reservoir layers, trains faster than a traditional, shallow ESN consisting of a single layer. Indeed, while the readout layer is trained via Ridge regression in both cases, the time required to process the sequence through the untrained reservoir is significantly lower in the proposed approach compared to traditional ESNs, thanks to parallel sequence processing.

\begin{table*}[ht!]
\caption{Test set results on memory-based tasks, assuming 128 recurrent neurons for each model. The \textbf{best result} is highlighted in bold.}
\label{tab:memory_results}
\centering
\footnotesize
\begin{NiceTabular}{@{}l c c c c c@{}}
\toprule
\textsc{Memory-based} & \makecell{$\uparrow$ \textsc{MemCap}} & \makecell{$\cdot 10^{-1}$ \\ $\downarrow$ \textsc{ctXOR5}} & \makecell{$\cdot 10^{-1}$ \\ $\downarrow$ \textsc{ctXOR10}} & \makecell{$\cdot 10^{-1}$ \\ $\downarrow$ \textsc{SinMem10}} & \makecell{$\cdot 10^{-1}$ \\ $\downarrow$ \textsc{SinMem20}} \\
\midrule
\rowcolor{baseline} ESN & $50.6_{\pm 1.6}$ & $3.6_{\pm 0.1}$ & $7.7_{\pm 0.6}$ & $3.6_{\pm 0.1}$ & $3.7_{\pm 0.1}$ \\
\rowcolor{baseline} ESN (deep) & $56.8_{\pm 1.3}$ & $\mathbf{3.4_{\pm 0.2}}$ & $\mathbf{5.2_{\pm 1.0}}$ & $1.2_{\pm 0.1}$ & $\mathbf{1.6_{\pm 0.1}}$ \\
\midrule
\rowcolor{our} ParalESN & $114.5_{\pm 1.4}$ & $3.9_{\pm 0.1}$ & $8.2_{\pm 0.2}$ & $3.7_{\pm 0.0}$ & $3.7_{\pm 0.0}$ \\
\rowcolor{our} ParalESN (deep) & $\mathbf{125.0_{\pm 0.2}}$ & $3.6_{\pm 0.1}$ & $5.6_{\pm 0.4}$ & $\mathbf{1.0_{\pm 0.2}}$ & $2.5_{\pm 0.4}$ \\
\bottomrule
\end{NiceTabular}
\end{table*}

\begin{table*}[ht!]
\caption{Test set results on forecasting tasks, assuming 128 recurrent neurons for each model. The \textbf{best result} is highlighted in bold.}
\label{tab:forecasting_results}
\centering
\resizebox{\textwidth}{!}{
\footnotesize
\begin{NiceTabular}{@{}l c c c c c c c c c c@{}}
\toprule
\textsc{Forecasting} & \makecell{$\cdot 10^{-2}$ \\ $\downarrow$ \textsc{Lz25}} & \makecell{$\cdot 10^{-2}$ \\ $\downarrow$ \textsc{Lz50}} & \makecell{$\cdot 10^{-4}$ \\ $\downarrow$ \textsc{MG}} & \makecell{$\cdot 10^{-2}$ \\ $\downarrow$ \textsc{MG84}} & \makecell{$\cdot 10^{-2}$ \\ $\downarrow$ \textsc{N10}} & \makecell{$\cdot 10^{-2}$ \\ $\downarrow$ \textsc{N30}} & \makecell{$\cdot 10^{-1}$ \\ $\downarrow$ \textsc{ETTh1}} & \makecell{$\cdot 10^{-1}$ \\ $\downarrow$ \textsc{ETTh2}} & \makecell{$\cdot 10^{-1}$ \\ $\downarrow$ \textsc{ETTm1}} & \makecell{$\cdot 10^{-1}$ \\ $\downarrow$ \textsc{ETTm2}} \\
\midrule
\rowcolor{baseline} ESN & $10.0_{\pm 0.3}$ & $30.8_{\pm 0.6}$ & $3.0_{\pm 0.0}$ & $6.5_{\pm 0.4}$ & $\mathbf{2.7_{\pm 0.4}}$ & $10.3_{\pm 0.1}$ & $9.1_{\pm 0.2}$ & $13.0_{\pm 1.7}$ & $6.7_{\pm 0.1}$ & $9.9_{\pm 7.3}$ \\
\rowcolor{baseline} ESN (deep) & $\mathbf{9.7_{\pm 0.2}}$ & $30.5_{\pm 0.3}$ & $\mathbf{2.0_{\pm 0.0}}$ & $\mathbf{4.2_{\pm 0.2}}$ & $3.0_{\pm 0.5}$ & $\mathbf{10.1_{\pm 0.1}}$ & $8.9_{\pm 0.1}$ & $\mathbf{9.6_{\pm 0.5}}$ & $6.6_{\pm 0.0}$ & $6.0_{\pm 0.6}$ \\
\midrule
\rowcolor{our} ParalESN & $10.4_{\pm 0.5}$ & $30.2_{\pm 0.5}$ & $2.8_{\pm 0.2}$ & $7.4_{\pm 0.4}$ & $3.7_{\pm 0.8}$ & $10.2_{\pm 0.1}$ & $9.0_{\pm 0.2}$ & $12.8_{\pm 1.2}$ & $\mathbf{6.5_{\pm 0.1}}$ & $5.1_{\pm 0.1}$ \\
\rowcolor{our} ParalESN (deep) & $10.3_{\pm 0.3}$ & $\mathbf{29.4_{\pm 0.3}}$ & $2.6_{\pm 0.4}$ & $5.2_{\pm 0.5}$ & $4.5_{\pm 1.0}$ & $10.2_{\pm 0.2}$ & $\mathbf{8.8_{\pm 0.1}}$ & $9.7_{\pm 0.9}$ & $\mathbf{6.5_{\pm 0.0}}$ & $\mathbf{5.0_{\pm 0.0}}$ \\
\bottomrule
\end{NiceTabular}
}
\end{table*}

\begin{figure*}[ht!]
    \centering
    \includegraphics[scale=1.0, center]{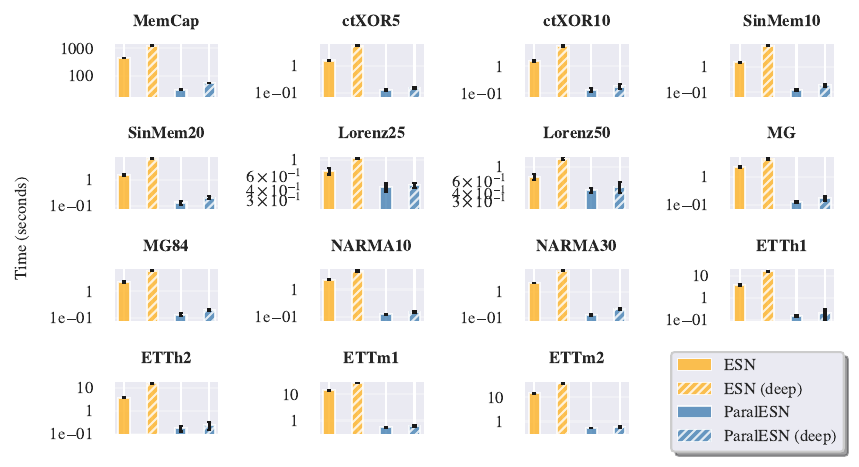}
    \caption{Training time comparison between ParalESNs and traditional ESNs for each memory-based and forecasting benchmark. ParalESN and ParalESN (deep) train orders of magnitudes faster.}
    \label{fig:memory_forecasting_time}
\end{figure*}

\subsection{Time Series Classification}
\label{sec:classification}
The time series classification benchmark consists of a selection of tasks from the UEA \& UCR repository \citep{bagnall2018uea, dau2019ucr}. For 1-D pixel-level classification, we consider two variants of the MNIST dataset \citep{lecun1998mnist}: (i) sequential MNIST (sMNIST), where pixels are flattened into a one-dimensional sequence, and (ii) permuted sequential MNIST (psMNIST), where a random permutation is additionally applied to the flattened pixels.

\textbf{Discussion.} Tables~\ref{tab:classification_results} and \ref{tab:mnist_results} present test set results on time series classification tasks from the UEA \& UCR repository and on sMNIST and psMNIST, respectively. Fig.~\ref{fig:tradeoff_mnist} visualizes the trade-off between predictive performance and computational efficiency across all models for the MNIST benchmarks. On time series classification, ParalESN achieves higher test accuracy compared to a shallow ESN: $+27.9\%$ on Blink, $+3.7\%$ on FaultDetectionA, $+7.6\%$ on FordA, $+5.7\%$ on FordB, and $+3.3\%$ on StarLightCurves. Similarly, ParalESN (deep) outperforms ESN (deep) by $+10.0\%$ on Blink, $+8.8\%$ on FaultDetectionA, $+2.7\%$ on FordA, $+0.7\%$ on FordB, and $+2.3\%$ on StarLightCurves. On sMNIST and psMNIST, ParalESN achieves higher test accuracy by $+13.4\%$ and $+17.1\%$, respectively, compared to traditional ESN. ParalESN (deep) provides gains of $+7.7\%$ and $+13.1\%$ over ESN (deep). These performance improvements come alongside substantial efficiency gains, with ParalESN and ParalESN (deep) requiring half or less of the training time, CO$_{\text{2}}$ emissions, and energy of traditional RC. Additionally, unlike traditional RC, ParalESN is competitive with LSTM, Transformer, S4, LRU, and Mamba. 

\section{Conclusions}
\label{sec:conclusions}
We introduced ParalESN, a novel framework for constructing high-dimensional, efficient, and parallelizable untrained RNNs based on linear diagonal recurrence. This work addresses fundamental limitations of traditional RC, including the need to process temporal data sequentially and the prohibitive memory footprint of high-dimensional reservoirs. Results across various time series and 1-D pixel-level classification benchmarks demonstrate that ParalESN is, on average, more accurate and faster than traditional RC, while remaining competitive with fully-trainable sequence models at a fraction of their computational cost.

\begin{table*}[ht!]
\caption{Test set results on time series classification tasks, assuming $1024$ recurrent neurons for each model. The \textbf{best result} is highlighted in bold.}
\label{tab:classification_results}
\centering
\footnotesize
\begin{NiceTabular}{@{}l c c c c c@{}}
\toprule
\textsc{Classification} & $\uparrow$ \textsc{Blink} & $\uparrow$ \textsc{FaultDetectionA} & $\uparrow$ \textsc{FordA} & $\uparrow$ \textsc{FordB} & $\uparrow$ \textsc{StarLightCurves} \\
\midrule
\rowcolor{baseline} \textsc{ESN} & $68.9_{\pm 5.5}$ & $71.3_{\pm 0.8}$ & $75.8_{\pm 0.9}$ & $62.7_{\pm 0.8}$ & $92.1_{\pm 0.7}$ \\
\rowcolor{baseline} \textsc{ESN (deep)} & $86.5_{\pm 1.8}$ & $86.0_{\pm 0.6}$ & $90.1_{\pm 0.8}$ & $76.1_{\pm 0.8}$ & $93.8_{\pm 0.4}$ \\
\midrule
\rowcolor{our} \textsc{ParalESN} & $\mathbf{96.8_{\pm 1.1}}$ & $75.0_{\pm 0.4}$ & $83.4_{\pm 0.7}$ & $68.4_{\pm 1.0}$ & $95.4_{\pm 0.4}$ \\
\rowcolor{our} \textsc{ParalESN (deep)} & $96.5_{\pm 0.4}$ & $\mathbf{94.8_{\pm 0.9}}$ & $\mathbf{92.8_{\pm 0.5}}$ & $\mathbf{76.8_{\pm 1.6}}$ & $\mathbf{96.1_{\pm 0.3}}$ \\
\bottomrule
\end{NiceTabular}
\end{table*}

\begin{table*}[htbp]
\caption{Test set results on sMNIST and psMNIST tasks. The \textbf{best result} is highlighted in bold, \underline{second-best} is underlined.}
\label{tab:mnist_results}
\centering
\footnotesize
\begin{NiceTabular}{@{}l c c c c c@{}}
\multicolumn{6}{c}{\textsc{\textbf{sMNIST}}} \\
\midrule
\textsc{Model} & \textsc{Params.} & $\uparrow$ \textsc{Accuracy} & $\downarrow$ \textsc{Time (min.)} & $\downarrow$ \textsc{Emissions (kg)} & $\downarrow$ \textsc{Energy (kWh)} \\
\midrule
\rowcolor{baseline} \textsc{LSTM} & $\approx 160$k & $97.5_{\pm 1.4}$ & $80.8_{\pm 6.8}$ & $0.34_{\pm 0.14}$ & $1.02_{\pm 0.42}$ \\
\rowcolor{baseline} \textsc{Transformer} & $\approx 160$k & $98.4_{\pm 0.1}$ & $141.0_{\pm 14.1}$ & $0.60_{\pm 0.28}$ & $1.81_{\pm 0.86}$ \\
\rowcolor{baseline} \textsc{S4}\textcolor{steelblue}{*} & $\approx 160$k & $\mathbf{99.2_{\pm 0.0}}$ & $16.1_{\pm 0.0}$ & $0.53_{\pm 0.0}$ & $1.61_{\pm 0.0}$\\
\rowcolor{baseline} \textsc{LRU} & $\approx 160$k & $\underline{98.5_{\pm 0.2}}$ & $29.1_{\pm 1.85}$ & $0.18_{\pm 0.02}$ & $0.57_{\pm 0.05}$ \\
\rowcolor{baseline} {\textsc{Mamba}} & $\approx 200$k & $98.4_{\pm 0.1}$ & $22.87_{\pm 4.48}$ & $0.19_{\pm 0.02}$ & $0.57_{\pm 0.06}$ \\
\rowcolor{baseline} \textsc{ESN} & $\approx 160$k & $82.5_{\pm 7}$ & $4.3_{\pm 0.1}$ & $\underline{0.02_{\pm 0.00}}$ & $0.07_{\pm 0.00}$\\
\rowcolor{baseline} \textsc{ESN (deep)} & $\approx 160$k & $91.4_{\pm 1.1}$ & $8.8_{\pm 0.1}$ & $0.04_{\pm 0.00}$ & $0.13_{\pm 0.00}$ \\
\midrule
\rowcolor{our} \textsc{ParalESN} & $\approx 160$k & $96.2_{\pm 1.3}$ & $\mathbf{2.7_{\pm 0.7}}$ & $\mathbf{0.01_{\pm 0.00}}$ & $\mathbf{0.04_{\pm 0.10}}$ \\
\rowcolor{our} \textsc{ParalESN (deep)} & $\approx 160$k & $97.2_{\pm 0.2}$ & $\underline{3.3_{\pm 0.5}}$ & $\underline{0.02_{\pm 0.00}}$ & $\underline{0.05_{\pm 0.00}}$ \\
\bottomrule
\addlinespace[0.2cm]
\multicolumn{6}{c}{\textsc{\textbf{psMNIST}}} \\
\midrule
\textsc{Model} & \textsc{Params.} & $\uparrow$ \textsc{Accuracy} & $\downarrow$ \textsc{Time (min.)} & $\downarrow$ \textsc{Emissions (kg)} & $\downarrow$ \textsc{Energy (kWh)} \\
\midrule
\rowcolor{baseline} \textsc{LSTM} & $\approx 160$k & $92.8_{\pm 0.5}$ & $89.3_{\pm 4.2}$ & $0.47_{\pm 0.04}$ & $1.41_{\pm 0.11}$ \\
\rowcolor{baseline} \textsc{Transformer} & $\approx 160$k & $97.4_{\pm 0.2}$ & $156.8_{\pm 2.7}$ & $0.65_{\pm 0.24}$ & $1.98_{\pm 0.73}$ \\
\rowcolor{baseline} \textsc{S4}\textcolor{steelblue}{*} & $\approx 160$k & $\mathbf{97.9_{\pm 0.0}}$ & $9.87_{\pm 0.0}$ & $0.35_{\pm 0.0}$ & $1.07_{\pm 0.0}$\\
\rowcolor{baseline} \textsc{LRU} & $\approx 160$k & $\underline{97.8_{\pm 0.1}}$ & $33.8_{\pm 3.42}$ & $0.21_{\pm 0.03}$ & $0.63_{\pm 0.09}$ \\
\rowcolor{baseline} {\textsc{Mamba}} & $\approx 200$k & $92.6_{\pm 0.1}$ & $43.25_{\pm 5.02}$ & $0.24_{\pm 0.03}$ & $0.73_{\pm 0.08}$ \\
\rowcolor{baseline} \textsc{ESN} & $\approx 160$k & $78.2_{\pm 1.6}$ & $4.3_{\pm 0.1}$ & $\underline{0.02_{\pm 0.00}}$ & $0.06_{\pm 0.00}$ \\
\rowcolor{baseline} \textsc{ESN (deep)} & $\approx 160$k & $82.1_{\pm 3.7}$ & $7.3_{\pm 0.1}$ & $0.04_{\pm 0.00}$ & $0.11_{\pm 0.00}$ \\
\midrule
\rowcolor{our} \textsc{ParalESN} & $\approx 160$k & $96.9_{\pm 0.1}$ & $\mathbf{2.8_{\pm 0.3}}$ & $\mathbf{0.01_{\pm 0.00}}$ & $\mathbf{0.04_{\pm 0.00}}$ \\
\rowcolor{our} \textsc{ParalESN (deep)} & $\approx 160$k & $95.2_{\pm 0.1}$ & $\underline{3.1_{\pm 0.2}}$ & $\underline{0.02_{\pm 0.00}}$ & $\underline{0.05_{\pm 0.00}}$ \\
\bottomrule
\end{NiceTabular}
\begin{tablenotes}
\item \textcolor{steelblue}{*} The computational efficiency metrics reported for S4 are likely underestimates, as a different and more powerful GPU was used for this model due to hardware availability constraints.
\end{tablenotes}
\end{table*}

\begin{figure*}[ht!]
\centering
\begin{minipage}[c]{0.55\textwidth}
    \includegraphics[scale=1.0]{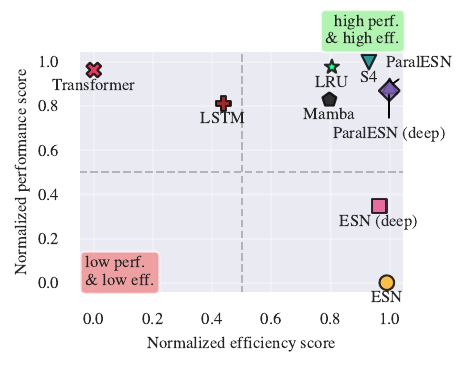}
\end{minipage}%
\hfill
\begin{minipage}[c]{0.42\textwidth}
\caption{Trade-off between performance (test accuracy) and efficiency (training time) for ParalESN, traditional RC, and fully-trainable sequence models on the sMNIST and psMNIST benchmarks. For each model and benchmark, we compute the percentage improvement over the ESN baseline. The normalized scores are then obtained via min-max normalization of the average improvements, mapping them to a $[0, 1]$ scale where $0$ corresponds to the worst-performing model and $1$ to the best-performing one. ParalESN is competitive with fully-trainable models while delivering substantial efficiency improvements.}
\label{fig:tradeoff_mnist}
\end{minipage}
\end{figure*}

\section*{Acknowledgements}
This work has been supported by EMERGE, a project funded by the European Innovation Council (prj. code 101070918), and by NEURONE, a project funded by the European Union - Next Generation EU, M4C1 CUP I53D23003600006, under program PRIN 2022 (prj. code 20229JRTZA). Computational resources were provided by Computing@Unipi, a computing service of the University of Pisa.

\clearpage
\twocolumn[
\section*{Impact Statement}
]
This paper presents work whose goal is to advance the field of Machine Learning. There are many potential societal consequences of our work, none which we feel must be specifically highlighted here.

\bibliography{bibliography}
\bibliographystyle{icml2026/icml2026}

\newpage
\appendix
\onecolumn

\section{Table of contents for the Appendix}
\label{sec:appendix_roadmap}
The Appendix is organized as follows:
\begin{itemize}
    \item Appendix~\ref{sec:appendix_computational_complexity} provides a computational complexity analysis.
    \item Appendix~\ref{sec:appendix_proofs} provides the paper's main proofs.
    \item Appendix~\ref{sec:appendix_definitions} provides definitions relevant to the theoretical analysis.
    \item Appendix~\ref{sec:appendix_datasets} describes the datasets used in our experiments.
    \item Appendix~\ref{sec:appendix_experimental_setting} details the experimental methodology.
    \item Appendix~\ref{sec:appendix_additional_experiments} presents additional experiments, including hyperparameter sensitivity analyses, ablations, and an evaluation of long-range temporal modeling capabilities.
\end{itemize}
\section{Computational Complexity}
\label{sec:appendix_computational_complexity}
Here, we compare the time and space complexity of ParalESN, traditional ESNs, and other RC approaches based on structured transforms, including SCR~\citep{rodan2011minimum} and Structured RC~\citep{dong2020reservoir}. For simplicity, we consider shallow, single-layer architectures and omit the computational cost of training the readout, as this cost is the same for all models. Table~\ref{tab:complexity_analysis} provides an overview of the computational complexity analysis. In terms of time complexity, the diagonal transition matrix of ParalESN yields a number of operations that scales linearly in the reservoir size $N_{\text{h}}$. Furthermore, the linear recurrence enables all time steps to be computed in parallel via an associative scan, removing the linear dependency on the sequence length and reducing it to logarithmic. In terms of space complexity, ParalESN significantly reduces the memory footprint of its parameters compared to traditional ESNs by employing a diagonal transition matrix, storing only $N_{\text{h}}$ diagonal elements rather than a full $N_{\text{h}} \times N_{\text{h}}$ matrix. This is crucial for enabling the construction of larger reservoirs.

\textbf{ESN.} We denote by $\{x_{\text{1}},\dots,x_{\text{T}}\}$ an input sequence of length $T$, where each $x_{\text{i}} \in \mathbb{R}^{N_{\text{in}}}$. A standard nonlinear ESN with $N_{\text{h}}$ hidden units updates its reservoir state according to Equation~\ref{eqn:esn_reservoir}. Each time step requires two matrix-vector multiplications: one with the transition matrix ($N_{\text{h}} \times N_{\text{h}}$) and one with the input weight matrix ($N_{\text{h}} \times N_{\text{in}}$). Thus, the time complexity for processing the entire sequence is $\mathcal{O}\!\left(T(N_{\text{h}}^2 + N_{\text{h}} N_{\text{in}})\right) = \mathcal{O}(T N_{\text{h}}^2)$, where the equality assumes $N_{\text{in}} < N_{\text{h}}$, as is standard in ESNs. Regarding space complexity, ESNs store a full $N_{\text{h}} \times N_{\text{h}}$ transition matrix, a full $N_{\text{h}} \times N_{\text{in}}$ input weight matrix, and an $N_{\text{h}}$-dimensional bias vector, giving a parameter space complexity of $\mathcal{O}(N_{\text{h}}(N_{\text{h}} + N_{\text{in}}))$. The space complexity for the recurrence is $\mathcal{O}(T N_{\text{h}})$, as $T$ hidden states must be stored\footnote{Assuming the case where all time steps are of interest.}.

\textbf{SCR.} The transition matrix employs a ring topology. The state update can be implemented in $\mathcal{O}(N_{\text{h}})$ time, including the multiplication of the state by a scaling factor~$\rho$, giving a total time complexity of $\mathcal{O}(T N_{\text{h}} N_{\text{in}})$, the same as a sequential ParalESN. Regarding space complexity, SCR stores only a full $N_{\text{h}} \times N_{\text{in}}$ input weight matrix and an $N_{\text{h}}$-dimensional bias vector. The transformation applied by the ring-topology transition matrix amounts to a cyclic shift of the vector elements and does not need to be stored explicitly. The parameter space complexity is thus $\mathcal{O}(N_{\text{h}} N_{\text{in}})$, and the space complexity for the recurrence is the same as in traditional ESNs, $\mathcal{O}(T N_{\text{h}})$.

\textbf{Structured RC.} The full transition matrix $\vect{W}_{\text{h}}$ in \eqref{eqn:esn_reservoir} is replaced by a composition of Hadamard and diagonal matrices, reducing the time complexity by a logarithmic factor in the reservoir size and achieving $\mathcal{O}(T N_{\text{h}} \log N_{\text{h}})$, assuming $N_{\text{in}} < \log N_{\text{h}}$. If the Hadamard matrix is stored explicitly, the parameter space complexity is the same as in traditional ESNs, $\mathcal{O}(N_{\text{h}}(N_{\text{h}} + N_{\text{in}}))$, though each element requires only a single bit rather than full floating-point precision since the matrix is binary. Alternatively, if the Hadamard transform is applied algorithmically via the fast Hadamard transform, no explicit storage is required, and only the diagonal matrices and input weights need to be stored, yielding a parameter space complexity of $\mathcal{O}(N_{\text{h}} N_{\text{in}})$. The space complexity for the recurrence is the same as in traditional ESNs, $\mathcal{O}(T N_{\text{h}})$.

\textbf{ParalESN.} In our approach, the reservoir update uses a diagonal transition matrix, which reduces the per-step computation to $\mathcal{O}(N_{\text{h}} N_{\text{in}})$. Hence, the sequential computation over a length-$T$ sequence has total cost $\mathcal{O}(T N_{\text{h}} N_{\text{in}})$. Furthermore, because the state updates are linear, the computation can be parallelized along the temporal dimension. Using a parallel associative scan algorithm~\citep{martin2018parallelizing}, the dependence on $T$ is reduced to a logarithmic factor\footnote{Assuming \(\Theta(T /\log T)\) parallel processors.}, yielding an overall complexity of $\mathcal{O}\left( \log(T)\, N_{\text{h}} N_{\text{in}} \right)$. ParalESN stores the $N_{\text{h}}$ elements of its diagonal transition matrix, an $N_{\text{h}} \times N_{\text{in}}$ input weight matrix, and an $N_{\text{h}}$-dimensional bias vector, giving a parameter space complexity of $\mathcal{O}(N_{\text{h}} N_{\text{in}})$. The space complexity for the recurrence is the same as in traditional ESNs, $\mathcal{O}(T N_{\text{h}})$.

Note that the associative scan could in principle be applied to any linear recurrence RC model, but practical limitations remain. Since the algorithm involves repeatedly squaring the transition matrix, it would generally require \(N_{\text{h}}^3\) operations per scan step for full, unstructured matrices. Moreover, in the case of Structured RC, matrix-matrix multiplication would produce a full matrix, destroying the Hadamard structure.

\begin{table}[t]
\caption{Overview of time and space complexity.}
\label{tab:complexity_analysis}
\centering
\footnotesize
\begin{NiceTabular}{@{}l l l l l@{}}
\toprule
& \multicolumn{2}{c}{\textsc{Time Complexity}} & \multicolumn{2}{c}{\textsc{Space Complexity}} \\
\textsc{Model} & \textsc{Single step} & \textsc{Whole sequence} & \textsc{Parameters} & \textsc{Recurrence} \\
\midrule
\rowcolor{baseline} \textsc{ESN} & $\mathcal{O}(N_{\text{h}}^2)$ & $\mathcal{O}(TN_{\text{h}}^2)$ & $\mathcal{O}(N_{\text{h}}(N_{\text{h}} + N_{\text{in}}))$ & $\mathcal{O}(T N_{\text{h}})$ \\
\rowcolor{baseline} \textsc{SCR} & $\mathcal{O}(N_{\text{h}}N_{\text{in}})$ & $\mathcal{O}(TN_{\text{h}}N_{\text{in}})$ & $\mathcal{O}(N_{\text{h}} N_{\text{in}})$ & $\mathcal{O}(T N_{\text{h}})$ \\
\rowcolor{baseline} \textsc{Structured RC} & $\mathcal{O}(N_{\text{h}}\log N_{\text{h}})$ & $\mathcal{O}(TN_{\text{h}}\log N_{\text{h}})$ & $\mathcal{O}(N_{\text{h}} N_{\text{in}})$ & $\mathcal{O}(T N_{\text{h}})$ \\
\midrule
\rowcolor{our} \textsc{ParalESN} & $\mathcal{O}(N_{\text{h}}N_{\text{in}})$ & $\mathcal{O}(\log{(T)}N_{\text{h}}N_{\text{in}})$ & $\mathcal{O}(N_{\text{h}} N_{\text{in}})$ & $\mathcal{O}(TN_{\text{h}})$ \\
\bottomrule
\end{NiceTabular}
\end{table}
\section{Filters and Fading Memory Property}
\label{sec:appendix_definitions}
Here, we provide a brief overview of the definitions used to characterize the class of filters that ESNs and linear ESNs can universally approximate. Specifically, we introduce the concept of fading memory filters. We mainly follow \citep{grigoryeva2018echo} and refer the reader to it for further details.

\textbf{Filters.} Informally, a filter is a function that maps semi-infinite sequences to semi-infinite sequences. More precisely, given $N_{\text{x}}\in \mathbb{N}$ and $N_{\text{y}}\in \mathbb{N}$, we define a filter $\mathcal{U}$ as follows:
\begin{equation}
    \mathcal{U}:(\mathbb{R}^{N_{\text{x}}})^{\mathbb{Z_-}} \to (\mathbb{R}^{N_{\text{y}}})^{\mathbb{Z_-}}, 
\end{equation}
where $(\mathbb{R}^{N})^{\mathbb{Z_-}}$ is the set of semi-infinite sequences of $N$-dimensional vectors indexed by non-positive integers\footnote{Filters can also be defined for bi-infinite sequences, i.e., with inputs and outputs indexed by \(\mathbb{Z}\). However, since in practice we are mainly interested in the value of the sequence at the last time step, it is common to restrict the definition to semi-infinite sequences.}, $\overleftarrow{\mathbf{v}} =(\vect{v}_{\text{i}})_{\text{i}\in  \mathbb{Z_-}}$ with $\vect{v}_{\text{i}}\in \mathbb{R}^N$.

Two desirable properties for filters are \emph{causality} and \emph{time-invariance}. A filter \(\mathcal{U}\) is causal if its output at time $t$, \(\mathbf{y}_{\text{t}}=\mathcal{U}(\overleftarrow{\mathbf{x}})_{\text{t}}\), depends only on the inputs up to time \(t\), i.e., \(\overleftarrow{\mathbf{x}}_{(-\infty,t]}\). Causality is an important property for tasks such as forecasting or autoregressive generation: without it, past inputs alone would not be sufficient to uniquely determine the present output. A filter is time-invariant if, given two shifted input sequences, it outputs two sequences shifted by the same amount. Formally, for any positive integer \(\tau>0\), we define the \emph{time-shift operator} \(\mathcal{T}_\tau\) by \(\mathcal{T}_\tau(\overleftarrow{\mathbf{x}})_{\text{t}} = \overleftarrow{\mathbf{x}}_{\text{t}-\tau}\). A filter is time-invariant if it commutes with this operator. This property ensures that the filter does not depend explicitly on time \(t\).

\textbf{Infinite norm.} The space \((\mathbb{R}^N)^\mathbb{Z}\) can be endowed with a norm, providing a notion of distance. Given the standard Euclidean norm for vectors \(\|\cdot\|\), we define the \emph{infinite norm} of a sequence as the supremum of the norms of its elements:
\begin{equation}
    \|\overleftarrow{\mathbf{v}}\|_{\infty} = \sup_{\text{i}\in \mathbb{Z}} \|\mathbf{v}_{\text{i}}\|.
\end{equation}

A \emph{distance} between two sequences \(\overleftarrow{\mathbf{v}}\) and \(\overleftarrow{\mathbf{w}}\) is then defined as the infinite norm of their difference:
\begin{equation}
    d_\infty(\overleftarrow{\mathbf{v}}, \overleftarrow{\mathbf{w}}) = \|\overleftarrow{\mathbf{v}}- \overleftarrow{\mathbf{w}}\|_\infty.
\end{equation}

\textbf{Weighted norm.} When measuring the difference between two sequences, it is desirable to weight recent values more heavily than distant past ones. The infinite norm treats all time steps equally and therefore cannot capture this requirement. Thus, we introduce a \emph{weighting sequence} $w = (w_i)_{\text{i}\in\mathbb{Z}_-}\in(0,1]^{\mathbb{Z}_-}$ with \(\lim_{\text{i}\to-\infty}w_{\text{i}} = 0\), and define the \emph{weighted norm} as:
\begin{equation}
    \|\overleftarrow{\mathbf{v}}\|_w = \|w\odot \overleftarrow{\mathbf{v}}\|_\infty,
\end{equation}
where $\odot$ denotes the element-wise product.

\textbf{Fading memory property.} We now give a formal definition of the fading memory property. The class of fading memory filters is the family of functions that ESNs, and their linear counterparts, approximate arbitrarily well, thus making them universal approximators. Informally, a fading memory filter is a filter that is continuous with respect to the topology induced by any weighted norm \(\|\cdot\|_w\):

\begin{definition}
A filter \(\mathcal{U}:(\mathbb{R}^{N_{\text{x}}})^{\mathbb{Z}_-}\to(\mathbb{R}^{N_{\text{y}}})^{\mathbb{Z}_-} \) has the \emph{fading memory property} if for any \(\overleftarrow{\mathbf{x}}_1\in (\mathbb{R}^{N_{\text{x}}})^{\mathbb{Z}_-}\) and any $\epsilon>0$, there exists \(\delta=\delta(\epsilon)>0\) such that for any \(\overleftarrow{\mathbf{x}}_{\text{2}}\in (\mathbb{R}^{N_{\text{x}}})^{\mathbb{Z}_-}\) satisfying
\begin{equation}
    \|\overleftarrow{\mathbf{x}}_{\text{1}} - \overleftarrow{\mathbf{x}}_{\text{2}}  \|_w <\delta,   
\end{equation}
we have
\begin{equation}
    \|\mathcal{U}(\overleftarrow{\mathbf{x}}_{\text{1}}) - \mathcal{U}(\overleftarrow{\mathbf{x}}_{\text{2}} ) \|_w <\epsilon.
\end{equation}
\end{definition}

ESNs have been shown to universally approximate \emph{time-invariant, causal filters with the fading memory property} \citep{grigoryeva2018echo} (Theorem 4.1).
\section{Proofs}
\label{sec:appendix_proofs}

\subsection{Proof of Theorem \ref{theo:esp} (ESP of ParalESN)}
\label{sec:appendix_proof_theorem}
We first prove the sufficient condition. Let \(\vect{h}_0\) and \(\vect{h}^{'}_0\) be two vectors in \( \mathbb{C}^{N_{\text{h}}}\), let ${\vect{x}_{\text{1}},\ldots,\vect{x}_{\text{N}}}$ be a sequence of inputs, and let $\rho(\bar{\vect{\Lambda}}_{\text{h}})< 1-\epsilon$ be the spectral radius of $\bar{\vect{\Lambda}}_{\text{h}}$. Since \(\bar{\vect{\Lambda}}_{\text{h}}\) is diagonal, we have \(\rho(\bar{\vect{\Lambda}}_{\text{h}}) =\|\bar{\vect{\Lambda}}_{\text{h}}\|_2\). Then,
\begin{align*}
   \|\vect{h}_{\text{N}} - \vect{h}^{'}_{\text{N}}\|_2 &= \| \bar{\vect{\Lambda}}_{\text{h}} \vect{h}_{\text{N}-1}  + \tau(\vect{W}_{\text{in}}\vect{x}_{\text{N}} + \vect{b}) - \bar{\vect{\Lambda}}_{\text{h}} \vect{h}^{'}_{\text{N}-1}  - \tau(\vect{W}_{\text{in}}\vect{x}_{\text{N}} + \vect{b}) \|_2  \\
   &= \| \bar{\vect{\Lambda}}_{\text{h}} (\vect{h}_{\text{N}-1} - \vect{h}^{'}_{\text{N}-1})\|_2\\
   &\leq \|\bar{\vect{\Lambda}}_{\text{h}}\|_2 \cdot \|(\vect{h}_{\text{N}-1} - \vect{h}^{'}_{\text{N}-1})\|_2 \\
   &= \rho(\bar{\vect{\Lambda}}_{\text{h}}) \cdot \|(\vect{h}_{\text{N}-1} - \vect{h}^{'}_{\text{N}-1})\|_2 \\
   &\leq (1-\epsilon) \cdot \|(\vect{h}_{\text{N}-1} - \vect{h}^{'}_{\text{N}-1})\|_2 \\
   &\vdots\\
   &\leq (1-\epsilon)^N \cdot \|\vect{h}_0 - \vect{h}^{'}_0\|_2\xrightarrow{N\to \infty} 0.
\end{align*}
This proves the sufficient condition. To prove the necessary condition, suppose that some diagonal entry \(\bar{\lambda}_{\text{i}}\) of \(\bar{\vect{\Lambda}}_{\text{h}}\) satisfies \(|\bar{\lambda}_{\text{i}}| \ge 1\). Given two initial states \(\vect{h}_0\) and \(\vect{h}^{'}_0\) such that \((\vect{h}_0)_{\text{i}}\neq (\vect{h}^{'}_0)_{\text{i}}\), unrolling the linear recurrence yields the explicit formulas for \(\vect{h}_{\text{N}}\) and \(\vect{h}^{'}_{\text{N}}\):
\begin{align}
    \vect{h}_{\text{N}} = \bar{\vect{\Lambda}}_{\text{h}}^N\vect{h}_0 +\sum_{j=1}^{N} \bar{\vect{\Lambda}}_{\text{h}}^{N-j} \tau(\vect{W}_{\text{in}} \vect{x}_{\text{j}} + \vect{b}) \\
    \vect{h}^{'}_{\text{N}} = \bar{\vect{\Lambda}}_{\text{h}}^N\vect{h}^{'}_0 +\sum_{j=1}^{N} \bar{\vect{\Lambda}}_{\text{h}}^{N-j} \tau(\vect{W}_{\text{in}} \vect{x}_{\text{j}} + \vect{b})
\end{align}
Subtracting these two equations gives
\begin{equation}
    \vect{h}_{\text{N}} -\vect{h}^{'}_{\text{N}} = \bar{\vect{\Lambda}}_{\text{h}}^N(\vect{h}_0 - \vect{h}^{'}_0).
\end{equation}
Focusing on component \(i\), we obtain \((\vect{h}_{\text{N}})_{\text{i}} -(\vect{h}^{'}_{\text{N}})_{\text{i}} = \bar{\lambda}_{\text{i}}^N((\vect{h}_0)_{\text{i}} - (\vect{h}^{'}_0)_{\text{i}})\). Since \((\vect{h}^{'}_0)_{\text{i}}\neq (\vect{h_0})_{\text{i}}\) and \(|\bar{\lambda}_{\text{i}}|\ge1\), this term never vanishes: it grows exponentially in magnitude if $|\lambda_{\text{i}}|>1$, and remains equal to \((\vect{h}^{'}_0)_{\text{i}} -  (\vect{h_0})_{\text{i}}\) if \(\lambda_{\text{i}}=1\).

\subsection{Proof of Proposition \ref{prop:equivalence}}
\label{sec:appendix_proof_proposition}
Since $\vect{W}_{\text{h}}$ is diagonalizable, we can write $\vect{W}_{\text{h}}=\vect{V}\vect{\Lambda}_{\text{h}} \vect{V}^{-1}$, with $\vect{V}\in \mathbb{C}^{N_{\text{h}}\times N_{\text{h}}}$ and $\vect{\Lambda}_{\text{h}}$ a diagonal matrix whose diagonal entries are the eigenvalues of $\vect{W}_{\text{h}}$. We can rewrite the recurrence in terms of $\vect{V}$ and $\vect{\Lambda}$ as
\begin{equation}\label{eqn:linear_rec}
    \vect{h}_{\text{t}} = \vect{V\Lambda}_{\text{h}} \vect{V}^{-1} \vect{h}_{t-1} +\vect{W}_{\text{in}}\vect{x}_{\text{t}}.
\end{equation}

Pre-multiplying both sides of \eqref{eqn:linear_rec} by $\vect{V}^{-1}$ yields a complex diagonal ESN with equations
\[
\begin{cases}
    \Tilde{\vect{h}}_{\text{t}} = \vect{\Lambda }_{\text{h}}\Tilde{\vect{h}}_{t-1} + \Tilde{\vect{W}}_{\text{in}} \vect{x}_{\text{t}}\\
    \vect{y}_{\text{t}} = \vect{W}_{\text{out}}(\sigma (\Tilde{\vect{W}}_{\text{h}} \Tilde{\vect{h}}_{\text{t}})) 
\end{cases}
\]
where $\Tilde{\vect{h}}_{\text{t}} =\vect{V}^{-1}\vect{h}_{\text{t}}$, $\Tilde{\vect{W}}_{\text{in}} =\vect{V}^{-1}\vect{W}_{\text{in}}$, and $\Tilde{\vect{W}}_{\text{h}} = \vect{W}_{\text{h}} \vect{V}$. Therefore, for any linear ESN and any input sequence, there exists an equivalent complex diagonal ESN that, given the same inputs, produces identical outputs.

\subsection{Proof of Theorem \ref{theo:universality}}
Diagonalizable matrices are dense in $\mathbb{C}^{N\times N}$; therefore, every matrix is diagonalizable up to an arbitrarily small perturbation \cite{Hartfiel1995}. That is, given any matrix $\vect{W}_{\text{h}}$, there exists $\mathbf{W}_\vect{h}^\delta$ such that 
\begin{equation}
    \|\vect{W}_{\text{h}}- \vect{W}_\vect{h}^\delta\|<\delta,
\end{equation}
and \(\vect{W}_\vect{h}^\delta\) is diagonalizable. 

Since we introduce a small error term \(\delta\), we must rule out the possibility that this error grows over an infinite sequence of inputs (as in the settings of \cite{grigoryeva2018universal,gonon2020reservoir}). To this end, we introduce a lemma establishing that the error in the induced filter can indeed be controlled, provided the perturbation is sufficiently small.

\begin{lemma}\label{lemma:universality}
Let $\mathcal{E}$ be a filter induced by a linear ESN with transition matrix $\vect{W}_{\text{h}}$ satisfying $\rho(\vect{W}_{\text{h}})<1$ (i.e., the linear ESN has the ESP). Then for any such $\mathcal{E}$, any $\epsilon>0$, and any compact set $K\subset \mathbb{R}^{N_{\text{in}}}$, there exists $\delta>0$ such that for any filter $\mathcal{E}^\delta$ induced by a transition matrix $\vect{W}_\vect{h}^\delta$ with $||\vect{W}_{\text{h}}-\vect{W}_\vect{h}^\delta||<\delta$ (where $||\cdot||$ denotes a matrix norm), we have $||\mathcal{E}-\mathcal{E}^\delta||_\infty=\sup_{x\in K^{\mathbb{Z}_-}} |\mathcal{E}(x)-\mathcal{E}^\delta(x)| <\epsilon$. 
\end{lemma}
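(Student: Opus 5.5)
The plan is to compare the two state-space representations of the filters directly and to bound their difference by a geometric series. Under the ESP, the linear ESN with transition matrix $\vect{W}_{\text{h}}$ induces a unique state filter on semi-infinite inputs:
\begin{equation*}
\vect{h}_0 = \sum_{k=0}^{\infty} \vect{W}_{\text{h}}^{k}\,\vect{W}_{\text{in}}\vect{x}_{-k}.
\end{equation*}
The perturbed filter is the same series with $\vect{W}_\vect{h}^\delta$ in place of $\vect{W}_{\text{h}}$. The outputs then follow by applying the (fixed) readout; I take $\mathcal{E}$ to denote the state filter, or its composition with a locally Lipschitz readout. By time invariance it suffices to control the output at time $0$ uniformly over $x\in K^{\mathbb{Z}_-}$. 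Set $M=\sup_{v\in K}\|\vect{W}_{\text{in}}v\|$, which is finite because $K$ is compact.

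The first step is to obtain uniform exponential decay for all nearby matrices. Since $\rho(\vect{W}_{\text{h}})<1$, Gelfand's formula gives an integer $m$ with $\|\vect{W}_{\text{h}}^m\|<1$. I will use the mildest route: choose $r$ with $\rho(\vect{W}_{\text{h}})<r<1$, and pick an induced operator norm $\|\cdot\|_*$ in which $\|\vect{W}_{\text{h}}\|_*\le r'<1$ for some $r'$. Such a norm exists by the standard renorming lemma. All norms on $\mathbb{C}^{N\times N}$ are equivalent, so for $\delta$ small we get $\|\vect{W}_\vect{h}^\delta\|_*\le q:=(1+r')/2<1$. In particular $\mathcal{E}^\delta$ is well defined, since the perturbed ESN also has the ESP, and both series converge absolutely, with every term bounded by $C q^k M$ for a norm-equivalence constant $C$.

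The second step, which I expect to be the main obstacle, is to bound the difference uniformly over the infinite past. The telescoping identity
\begin{equation*}
A^k-B^k=\sum_{j=0}^{k-1}A^{j}(A-B)B^{k-1-j}
\end{equation*}
gives $\|A^k-B^k\|_*\le k q^{k-1}\|A-B\|_*$ when $\|A\|_*,\|B\|_*\le q$. Therefore
\begin{equation*}
\|\vect{h}_0-\vect{h}^\delta_0\|\le C\,M\,\|\vect{W}_{\text{h}}-\vect{W}_\vect{h}^\delta\|_*\sum_{k\ge1}k q^{k-1}=\frac{C\,M}{(1-q)^2}\,\|\vect{W}_{\text{h}}-\vect{W}_\vect{h}^\delta\|_*.
\end{equation*}
This bound is independent of the input sequence in $K^{\mathbb{Z}_-}$, and it is linear in the perturbation, which in turn is at most $C'\delta$. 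The delicate point is precisely that the perturbation must not accumulate over infinitely many steps. The uniform contraction constant $q$ obtained in the first step is what makes the series $\sum k q^{k-1}$ summable, so the accumulation is harmless.

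Finally, I choose $\delta$ small enough that $CC'M\delta/(1-q)^2<\epsilon$. If $\mathcal{E}$ includes the readout, I additionally use the fact that all states lie in the bounded ball of radius $CM/(1-q)$. On this ball, a continuous readout such as an MLP is uniformly continuous, and it is Lipschitz for a Lipschitz activation. The state bound therefore transfers to the output bound after shrinking $\delta$ accordingly.
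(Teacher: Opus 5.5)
Your proposal is correct and follows the paper's proof: the series representation of the state, the telescoping identity for $\vect{W}_{\text{h}}^k-(\vect{W}_\vect{h}^\delta)^k$, and summability of $\sum_k k q^{k-1}$. Your adapted operator norm, with $\|\vect{W}_\vect{h}^\delta\|_*\le q<1$ uniformly for small perturbations, also fixes two gaps in the paper, which takes $C$ and $\alpha$ as maxima over constants depending on $\vect{W}_\vect{h}^\delta$ itself (so its final choice of $\delta$ is circular) and never checks that $\rho(\vect{W}_\vect{h}^\delta)<1$.
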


\begin{proof}
We aim to show that $||\mathcal{E}-\mathcal{E}^\delta||_\infty=\sup_{x\in K^{\mathbb{Z}_-}} |\mathcal{E}(x)-\mathcal{E}^\delta(x)| <\epsilon$. Let $M$ be a constant such that $|\vect{x}_{\text{t}}|<M$ for all $\vect{x}_{\text{t}}\in K$. Then, for any fixed $t$,
\begin{align}
    \vect{h}_{\text{t}} - \vect{h}^\delta_{\text{t}} &= \sum_{k=1}^{\infty}\vect{W}_{\text{h}}^k\vect{W}_{\text{in}}\vect{x}_{t-k} - \sum_{k=1}^{\infty}(\vect{W}_{\text{h}}^{\delta})^k\vect{W}_{\text{in}}\vect{x}_{t-k} \\
    &= \sum_{k=0}^\infty (\vect{W}_{\text{h}}^k-(\vect{W}_{\text{h}}^\delta)^k)\vect{W}_{\text{in}}\vect{x}_{t-k}.
\end{align}

Taking norms, 
\begin{align}
\|\vect{h}_{\text{t}} - \vect{h}^\delta_{\text{t}} \|&= \left\| \sum_{k=0}^\infty 
(\vect{W}_{\text{h}}^k-(\vect{W}_{\text{h}}^\delta)^k)\vect{W}_{\text{in}}\vect{x}_{t-k}\right\| \\
&\leq 
\sum_{k} \|\vect{W}_{\text{in}}\||\vect{x}_{t-k}| \cdot \|(\vect{W}_{\text{h}}^k-(\vect{W}_{\text{h}}^\delta)^k)\| \\
\label{eqn:bound_on_x}
&\leq 
M'\sum_{k} \|(\vect{W}_{\text{h}}^k-(\vect{W}_{\text{h}}^\delta)^k)\|
\end{align}
where $M' =M\|\vect{W}_{\text{in}}\|$. We now apply the telescoping identity to expand the difference of matrix powers:
\begin{align}
\left\|(\vect{W}_{\text{h}}^k-(\vect{W}_{\text{h}}^\delta)^k)\right\| &= \left\|\sum_{j=0}^{k-1} \vect{W}_{\text{h}}^j (\vect{W}_{\text{h}} - \vect{W}_\vect{h}^\delta ) (\vect{W}_{\text{h}}^{\delta})^{k-1-j}\right\| \\
&\leq 
\sum_{j=0}^{k-1} \|\vect{W}_{\text{h}}^j\|\cdot \|\vect{W}_{\text{h}} - \vect{W}_\vect{h}^\delta \| \cdot \|(\vect{W}_{\text{h}}^{\delta})^{k-1-j} \|.
\end{align}

It is a standard result in linear algebra that if $\rho(\vect{W})<1$, then there exist constants $C(\vect{W})>0$ and $\alpha(\vect{W})$ with $\rho(\vect{W})<\alpha(\vect{W})<1$ such that $\|\vect{W}^k\|\leq C\alpha^k$ (see e.g.\ \cite{horn_johnson}, Corollary 5.6.13). Setting $C=\max\{C(\vect{W}_{\text{h}}), C(\vect{W}^\delta_h)\}$ and $\alpha=\max\{\alpha(\vect{W}_{\text{h}}),\alpha(\vect{W}^\delta_h)\}$, we obtain
\[
\left\|(\vect{W}_{\text{h}}^k-(\vect{W}_{\text{h}}^\delta)^k)\right\|\leq \sum_{j=0}^{k-1} C\alpha^{j}\delta C\alpha^{k-1-j}=kC^2\alpha^{k-1}\delta.
\]

Summing over all $k$ gives, for every $t$,
\[
\|\vect{h}_{\text{t}} - \vect{h}^\delta_{\text{t}}\| \leq M' C^2\delta \sum_k k\alpha^{k-1}=M'C^2\delta S,
\]
where $S=\sum_k k\alpha^{k-1}=\alpha/(1-\alpha)^2$ is the derivative of the geometric series with ratio $\alpha<1$, and is therefore finite. Choosing $\delta < \frac{\epsilon}{M'C^2S}$ yields the desired bound.
\end{proof}

\begin{remark}
Lemma~\ref{lemma:universality} extends naturally to a stochastic setting. Let $(\vect{x}_{\text{t}})_{t \in \mathbb{Z}_-}$ be a stochastic process defined on a probability space $(\Omega,\mathcal{F},\mu)$, and assume that:
\begin{itemize}
\item $(\vect{x}_{\text{t}})$ is \emph{stationary}, i.e., for any finite collection of indices $t_1,\dots,t_k$ and any shift $\tau \in \mathbb{Z}_-$, the marginal distributions satisfy
\[
p(\vect{x}_{t_1},\dots,\vect{x}_{t_k}) = p(\vect{x}_{t_1+\tau},\dots,\vect{x}_{t_k+\tau});
\]
\item $\vect{x}_0 \in L^p(\mu)$ for some $1 \le p < \infty$, i.e.,
\[
    \int |\vect{x}_0|^p\,d\mu(\vect{x}_0) <\infty.
\]
\end{itemize}

By stationarity,
\[
\|\vect{x}_{\text{t}}\|_{L^p} = \|\vect{x}_0\|_{L^p} < \infty \quad \text{for all } t \in \mathbb{Z}_-,
\]
so the input process is uniformly bounded in $L^p$.
Under these assumptions, Lemma~\ref{lemma:universality} can be proved via the same chain of inequalities, and the perturbation bound therefore holds in $L^p(\mu)$.
Note that these assumptions are strictly weaker than those required by the universality results of \cite{gonon2020reservoir}: the latter impose stronger integrability conditions, whereas the present lemma requires only $L^p$-boundedness and stationarity. The theorems of \cite{gonon2020reservoir} therefore apply.
\end{remark}

Finally, we prove the theorem. Since linear ESNs are universal in the class of fading memory filters \citep{grigoryeva2018universal,gonon2020reservoir}, it suffices to show that ParalESN is dense in the set of linear ESNs with the ESP. Since the set of non-diagonalizable matrices over \(\mathbb{C}\) has Lebesgue measure zero, given any linear ESN filter $\mathcal{E}$ and any $\epsilon>0$, we can perturb its transition matrix $\vect{W}_{\text{h}}$ by $\delta$ small enough so that $\vect{W}^\delta_h$ satisfies the conditions of Lemma~\ref{lemma:universality} and is diagonalizable. The resulting filter $\mathcal{E}'$ is then exactly equivalent to a ParalESN filter $\mathcal{D}$ with transition matrix $\vect{\Lambda}$ given by the eigendecomposition of $\vect{W}^\delta_h$, and $\|\mathcal{E}-\mathcal{D}\|\leq \epsilon$.
\section{Datasets}
\label{sec:appendix_datasets}

\subsection{Memory-based}
\label{sec:appendix_memory}

\textbf{MemCap.}
The MemCap task consists of outputting a delayed version of the input $k$ time steps in the past. The memory capacity score, used to assess performance on this task, is computed by summing the squared correlation coefficients between the output and the $k$-step delayed input, for each delay $k = 1, \hdots, 200$. We generate an input sequence drawn uniformly from $[-0.8, 0.8]$ with length $T = 7000$. The first 5000 time steps are used for training, the next 1000 for validation, and the final 1000 for testing. Both training and inference employ a 100 time step washout to warm up the reservoir.

\textbf{ctXOR.}
Consider a one-dimensional input time series $\mathbf{x}(t)$ drawn uniformly from $(-0.8, 0.8)$, and let $\mathbf{r}(t - d) = \mathbf{x}(t - d - 1)\mathbf{x}(t - d)$. The task is to output $\mathbf{y}(t) = \mathbf{r}(t - d)^{2}\sign(\mathbf{r}(t - d))$, where $d$ is the delay. We consider delays $d = 5$ (ctXOR5) and $d = 10$ (ctXOR10).

\textbf{SinMem.}
Given a one-dimensional input time series $\mathbf{x}(t)$ drawn uniformly from $(-0.8, 0.8)$, the task is to output $\mathbf{y}(t) = \sin(\pi\mathbf{x}(t - d))$. We consider delays $d = 10$ (SinMem10) and $d = 20$ (SinMem20).

\subsection{Forecasting}
\label{sec:appendix_forecasting}

\textbf{Lorenz96.}
The Lorenz96 task is to predict the next state of the time series $\mathbf{x}(t)$, governed by the following $5$-dimensional chaotic system:
\begin{equation}
    \frac{\partial f_{i}}{\partial t}(t) = f_{i-1}(t) ( f_{i+1}(t) - f_{i-2}(t) ) - f_{i}(t) + 8,
    \label{eq:lorenz96}
\end{equation}
for $i = 1, \ldots, 5$. We focus on predicting the 25th (Lz25) and 50th (Lz50) future states of the time series, i.e., the tasks involve predicting $\mathbf{y}(t) = \mathbf{x}(t + 25)$ and $\mathbf{y}(t) = \mathbf{x}(t + 50)$, respectively. We generate a time series of length $T = 1200$, with the first $400$ time steps used for training, the next $400$ for validation, and the final $400$ for testing.

\textbf{Mackey-Glass.}
The Mackey-Glass 17 (MG) task is to predict the next state of the following time series:
\begin{equation}
    \frac{\partial f}{\partial t}(t) = \frac{0.2 f(t - 17)}{1 + f(t - 17)^{10}} - 0.1 f(t).
    \label{eq:mackey_glass}
\end{equation}
We focus on predicting the 1st and 84th future states of the time series, i.e., the tasks involve predicting $\mathbf{y}(t) = \mathbf{x}(t + 1)$ (MG) and $\mathbf{y}(t) = \mathbf{x}(t + 84)$ (MG84), respectively. We generate a time series of length $T = 10000$, with the first $5000$ time steps used for training, the next $2500$ for validation, and the final $2500$ for testing.

\textbf{NARMA.}
Given a one-dimensional input time series $\mathbf{x}(t)$ drawn uniformly from $[0, 0.5]$, the NARMA task is to predict the next state of the following time series:
\begin{equation}
\mathbf{y}(t) = 0.3\mathbf{y}(t - 1) + 0.01\mathbf{y}(t - 1) \sum_{i = 1}^{d} \mathbf{y}(t - i) + 1.5\mathbf{x}(t - d)\mathbf{x}(t - 1) + 0.1.
\label{eq:narma30}
\end{equation}
We consider NARMA10 (N10) and NARMA30 (N30), with look-ahead delays of $d = 10$ and $d = 30$, respectively. We generate a time series of length $T = 10000$, with the first $5000$ time steps used for training, the next $2500$ for validation, and the final $2500$ for testing.

\textbf{Real-world datasets.}
The ETTh1, ETTh2, ETTm1, and ETTm2 datasets \citep{zhou2021informer} represent challenging real-world forecasting tasks. Table~\ref{tab:realworld_forecasting_datasets} provides an overview of their characteristics. We focus on the multivariate case and predict all available features, using a prediction horizon of $192$ time steps. Each feature in the training set is independently normalized to zero mean and unit variance; the resulting normalization coefficients are then applied to the validation and test sets. Since order-of-magnitude outliers are present in the training set, the normalized training data is clipped to the interval $(-10, 10)$ across all datasets. The validation and test data remain unmodified.

\begin{table}[hbpt]
\caption{Overview of the real-world forecasting datasets. Training, validation, and test sizes are reported in number of time steps.}
\label{tab:realworld_forecasting_datasets}
\centering
\footnotesize
\begin{NiceTabular}{@{}l c c c c@{}}
\toprule
\textsc{Dataset} & \textsc{Train} & \textsc{Validation} & \textsc{Test} & \textsc{\# Features} \\
\midrule
\textsc{ETTh1/2}    & 8640     & 2880      & 2880    & 7 \\
\textsc{ETTm1/2}    & 34560    & 11520     & 11520   & 7 \\
\bottomrule
\end{NiceTabular}
\end{table}

\subsection{Classification}
\label{sec:appendix_classification}
Table~\ref{tab:classification_datasets} provides an overview of the classification datasets. The validation set is obtained via a $90$–$10$ stratified split for sMNIST and psMNIST, and via a $70$–$30$ stratified split for the remaining tasks. No data augmentation is applied.

\begin{table}[hbpt]
\caption{Overview of time series and 1-D pixel-level classification datasets.}
\label{tab:classification_datasets}
\centering
\footnotesize
\begin{NiceTabular}{@{}l c c c c c@{}}
\toprule
\textsc{Dataset} & \textsc{Train} & \textsc{Test} & \textsc{Length} & \textsc{\# Features} & \textsc{\# Classes} \\
\midrule
\textsc{Blink}              & 500       & 450       & 510   & 4 & 2 \\
\textsc{FaultDetectionA}    & 10912     & 2728      & 5120  & 1 & 3 \\
\textsc{FordA}              & 3601      & 1320      & 500   & 1 & 2 \\
\textsc{FordB}              & 3636      & 810       & 500   & 1 & 2 \\
\textsc{sMNIST/psMNIST}     & 60000     & 10000     & 784   & 1 & 10 \\
\textsc{StarLightCurves}    & 1000      & 8236      & 1024  & 1 & 3 \\
\bottomrule
\end{NiceTabular}
\end{table}

\subsection{Long Range Arena}
The Long Range Arena (LRA) benchmarks are specifically designed to evaluate a model's ability to capture long-range dependencies across diverse data modalities, including images and text. Table~\ref{tab:lra_datasets} provides an overview of the considered datasets.

\begin{table}[hbpt]
\caption{Overview of Long Range Arena (LRA) classification datasets.}
\label{tab:lra_datasets}
\centering
\footnotesize
\begin{NiceTabular}{@{}l c c c c c@{}}
\toprule
\textsc{Dataset} & \textsc{Train} & \textsc{Test} & \textsc{Length} & \textsc{\# Features} & \textsc{\# Classes} \\
\midrule
\textsc{Image}       & 45000  & 10000 & 1024  & 1  & 10 \\
\textsc{ListOps}     & 96000  & 2000  & 2048  & 1  & 10 \\
\textsc{Text}        & 25000  & 25000 & 4096  & 1  & 2  \\
\textsc{PathFinder}  & 160000 & 20000 & 1024  & 1  & 2  \\
\bottomrule
\end{NiceTabular}
\end{table}
\section{Experimental Setting}
\label{sec:appendix_experimental_setting}

\subsection{Computational Resources}
\label{sec:appendix_computational_resources}
Experiments are run on a system with 4 × 18-core Intel Xeon Gold 6140M CPUs @ 2.30 GHz (144 threads total) and 1 × NVIDIA Tesla V100-PCIE-16GB GPUs. To track computational efficiency metrics such as training time, energy consumption, and CO$_2$ emissions, we use CodeCarbon \cite{codecarbon2024}.

\subsection{Fully-Trainable Sequence Models}
\label{sec:appendix_sequence_models}
In our experiments on sMNIST and psMNIST, we cover a wide range of fully-trainable sequence models, including recurrent (LSTMs \cite{hochreiter1997long}), attention-based (Transformers \citep{vaswani2017attention}), and deep state space models (S4\footnote{Implementation from \href{https://github.com/state-spaces/s4}{https://github.com/state-spaces/s4}.} \cite{gu2022efficiently} and LRU\footnote{Implementation from \href{https://github.com/NicolasZucchet/minimal-LRU}{github.com/NicolasZucchet/minimal-LRU}.} \citep{orvieto2023resurrecting}, and Mamba\footnote{Implementation from \href{https://github.com/state-spaces/mamba}{github.com/state-spaces/mamba}.} \citep{gu2024mamba}). We run our own training runs to track efficiency metrics such as training time and energy consumption. See ppendix~\ref{sec:appendix_model_selection} for details on their hyperparameters and model selection. 

For LRA, we also consider various attention-based models, S5\cite{smith2023simplified}, and RWKV \cite{peng2023rwkv}. We report results from the corresponding papers.

\subsection{Model Selection}
\label{sec:appendix_model_selection}
\begin{wraptable}[17]{r}{7cm}
\centering
\caption{Hyperparameters for RC models.}
\label{tab:hyperparameters_rc}
\footnotesize
\begin{threeparttable}
\begin{tabular}{ll}
\toprule
\textsc{Hyperparameters} & \textsc{Values} \\
\midrule
\texttt{concat} & $[\text{False}, \text{True}]$ \\
$L$ & [1, 2, 3, 4, 5] \\
$\omega_{\text{b}}$ and inter-$\omega_b$ & \{0, 0.01, 0.1, 1, 10\} \\ 
$\tau$ and inter-$\tau$ & \{0.1, 0.5, 0.9, 1\} \\
\midrule
\textbf{(ESN)} & \\
$\omega_{\text{in}}$ and inter-$\omega_x$ & \{0.01, 0.1, 1, 10\} \\
$\rho$ and inter-$\rho$ & \{0.1, 0.5, 0.9\} \\
\midrule
\textbf{(ParalESN)} & \\
\emph{(Reservoir)} & \\
$\rho_{\text{max}}$ and inter-$\rho_{\text{max}}$ & \{0.1, 0.5, 0.9\} \\
$\rho_{\text{min}}$ and inter-$\rho_{\text{min}}$ & \{0, 0.1, 0.5, 0.9\} \\
$\theta_{\text{max}}$ and inter-$\theta_{\text{max}}$ & \{$\frac{1}{2}\pi$, $\pi$, $2\pi$\} \\
$\theta_{\text{min}}$ and inter-$\theta_{\text{min}}$ & \{0, $\frac{1}{2}\pi$, $\pi$, $2\pi$\} \\
\emph{(Mixer)} & \\
$\omega_{\text{mix}}$ and inter-$\omega_{\text{mix}}$ & \{0.01, 0.1, 1, 10\} \\
$\omega_{\text{mixb}}$ and inter-$\omega_{\text{mixb}}$ & \{0, 0.01, 0.1, 1, 10\} \\ 
$k$ and inter-$k$ & \{3, 5, 7, 9\} \\
\bottomrule
\end{tabular}
\end{threeparttable}
\end{wraptable}
Model selection is performed via Bayesian search, exploring configurations up to a maximum runtime of $24$ hours. The best configuration is chosen based on validation set performance.

For RC models, Table~\ref{tab:hyperparameters_rc} provides an overview of the hyperparameter values explored. Common hyperparameters for both ESN and ParalESN include the bias scaling $\omega_{\text{b}}$, the leaky rate $\tau$, and the number of layers $L$. For traditional RC, we additionally explore the spectral radius $\rho$, used to rescale the recurrent weight matrix, and the input scaling $\omega_{\text{in}}$, used to scale the input weight matrix. For ParalESN, we explore the minimum and maximum magnitudes of the diagonal entries, $\rho_{\text{min}}$ and $\rho_{\text{max}}$, and the minimum and maximum phases (i.e., the angle with the positive $x$-axis), $\theta_{\text{min}}$ and $\theta_{\text{max}}$, which control the eigenvalues of the diagonal transition matrix. We additionally explore $\omega_{\text{mix}}$ and $\omega_{\text{mixb}}$ for scaling the kernel and bias vector of the mixing layer, and $k$ for the kernel size. For the deep configurations, we explore the number of (untrained) reservoir layers $L$ and an additional set of (inter) hyperparameters for layers beyond the first, to promote diverse dynamics across layers. We also consider the hyperparameter $\texttt{concat} \in \{\text{False}, \text{True}\}$, which determines whether the readout receives the states of the last layer only or the concatenation of states across all layers. To ensure a consistent number of trainable parameters when states are concatenated, the total number of reservoir units is evenly divided across layers; any remainder is allocated to the first layer. When the readout is implemented as a ridge regressor, we explore regularization strengths in $\{0, 0.01, 0.1, 1, 10, 100\}$. When implemented as an MLP, it is trained with the Adam optimizer at a learning rate of $5 \times 10^{-4}$ with all other settings left at their PyTorch defaults. The readout is implemented as a ridge regressor trained via a Singular Value Decomposition (SVD) solver for memory-based, forecasting, and time series classification tasks, and as a 2-layer Multi-Layer Perceptron (MLP) for the MNIST benchmarks. Prior to the readout, the reservoir states are always standardized to zero mean and unit variance.

For fully-trainable models on sMNIST and psMNIST, we use the Adam optimizer and sweep the learning rate with a log-uniform distribution over $[0.0001, 0.01]$ and the weight decay over $\{0, 10^{-7}, 10^{-6}, 10^{-5}, 10^{-4}, 10^{-3}, 10^{-2}, 10^{-1}\}$. For LRU, we additionally sweep $\rho_{\text{min}}$ with a uniform distribution over $[0, 1]$, $\rho_{\text{max}}$ over $[0.8, 1]$, and $\theta_{\text{max}}$ over $[0.001, \pi]$. The LSTM uses a hidden size of $192$. The Transformer uses an input size of $96$ and a feedforward size of $128$ with $3$ encoder layers. LRU uses $3$ layers with a hidden size of $82$. Mamba uses $6$ layers with model dimension $64$, state expansion factor $16$, and local convolutional width $4$. S4 uses $2$ layers with a hidden size of $172$. Fully-trainable models and the MLP readout of ParalESN are trained for up to $100$ epochs with early stopping, to avoid inflating the computational efficiency metrics. For classification tasks, the classification head of each model receives only the hidden state at the final time step.
\section{Additional Experiments}
\label{sec:appendix_additional_experiments}

\subsection{Hyperparameter Sensitivity}
Figures~\ref{fig:paralesn_hparams} and \ref{fig:deep_paralesn_hparams} analyze the hyperparameter sensitivity of ParalESN and ParalESN (deep) on NARMA10 (N10). Recall that, for ParalESN (deep), we use the prefix \emph{inter-} to denote the hyperparameters of all layers beyond the first, while the unprefixed notation is used for those of the first layer. As is common in RC, both models are sensitive to most of their hyperparameters in both the shallow and deep configurations, a consequence of relying on untrained dynamics. The most critical hyperparameters are consistent across both configurations (e.g., spectral radii and phases), while the kernel size has the smallest impact. Although careful model selection is essential for predictive performance, we note that ParalESN's speed advantage over traditional RC makes it possible to evaluate significantly more configurations within the same time budget, which partially mitigates this concern.

\begin{figure}[ht!]
    \centering
    \subfloat[Reservoir hyperparameters]{\label{fig:reservoir_hparams}
        \includegraphics[scale=1.0]{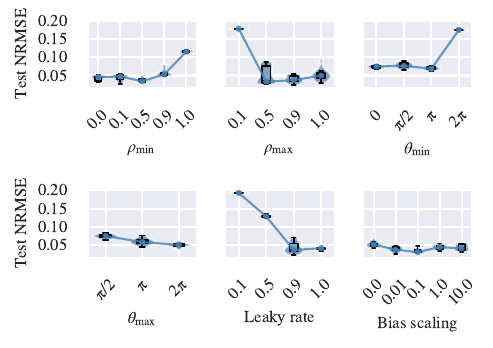}
    }
    \hfill
    \subfloat[Mixer hyperparameters]{\label{fig:mixer_hparams}
        \includegraphics[scale=1.0]{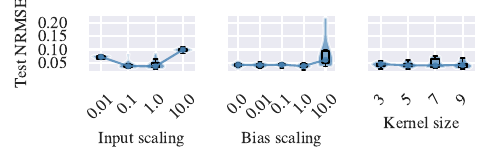}
    }
    \caption{Sensitivity of ParalESN to its hyperparameters for \emph{(a)} the reservoir layer and \emph{(b)} the mixing layer, on the N10 task.}
    \label{fig:paralesn_hparams}
\end{figure}

\clearpage

\begin{figure}[ht!]
    \centering
    \subfloat[Number of layers and concatenation]{\label{fig:deep_general_hparams}
        \includegraphics[scale=1.0]{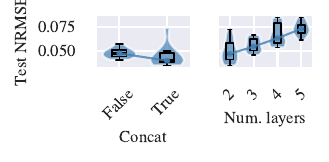}
    }
    \vspace{1ex}
    \subfloat[Reservoir hyperparameters]{\label{fig:deep_reservoir_hparams}
        \includegraphics[scale=1.0]{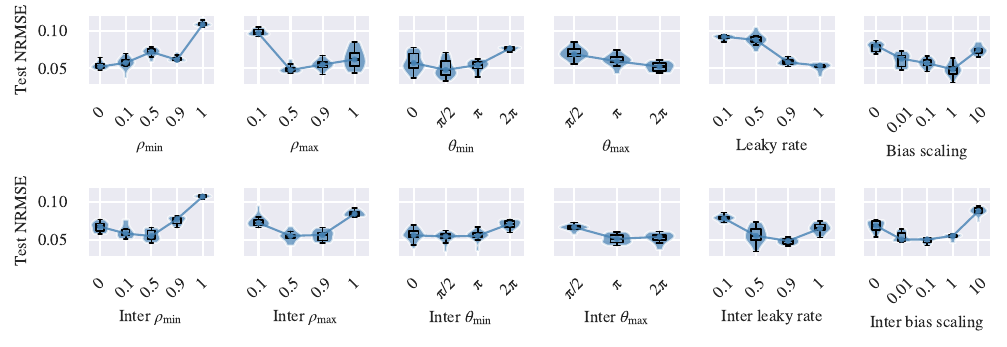}
    }
    \vspace{1ex}
    \subfloat[Mixer hyperparameters]{\label{fig:deep_mixer_hparams}
        \includegraphics[scale=1.0]{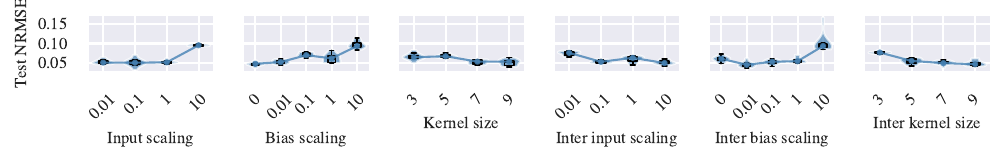}
    }
    \caption{Sensitivity of ParalESN (deep) to its hyperparameters for \emph{(a)} number of layers and concatenation, \emph{(b)} the reservoir layer, and \emph{(c)} the mixing layer, on the N10 task.}
    \label{fig:deep_paralesn_hparams}
\end{figure}

\subsection{Parameter Efficiency}
\begin{wraptable}[10]{r}{7cm}
\caption{Total number of parameters in the recurrence of ParalESN and traditional ESNs, assuming input dimension $N_{\text{in}} = 1$, $1024$ recurrent neurons per layer, and $5$ layers for deep configurations.}
\label{tab:parameters_comparison}
\centering
\footnotesize
\begin{NiceTabular}{@{}l c @{}}
\toprule
\textsc{Model} & \textsc{Total Params.} \\
\midrule
\rowcolor{baseline} ESN & $\approx 1.1\text{M}$ \\
\rowcolor{baseline} ESN (deep) & $\approx 9.4\text{M}$ \\
\midrule
\rowcolor{our} ParalESN & $\approx 2.1\text{K}$ \\
\rowcolor{our} ParalESN (deep) & $\approx 10.2\text{K}$ \\
\bottomrule
\end{NiceTabular}
\end{wraptable}
As highlighted by the computational complexity analysis in Appendix~\ref{sec:appendix_computational_complexity} and Table~\ref{tab:complexity_analysis}, ParalESN is significantly more parameter-efficient than traditional RC. This efficiency stems from the use of diagonal transition matrices in place of dense matrices. Table~\ref{tab:parameters_comparison} provides an empirical comparison of the total parameter counts of ParalESN and traditional ESNs, excluding the readout layer, whose parameters are identical across all models. For the same number of recurrent neurons, ParalESN has orders of magnitude fewer parameters.

\subsection{Complex vs.\ Real-Valued Parameterization}
Whether complex-valued representations are strictly necessary compared to real-valued alternatives is an active area of investigation. Recent work has shown that complex-valued parameterizations offer benefits both in terms of training dynamics and long-term memory retention \cite{orvieto2024universality}. Conceptually, complex eigenvalues enable oscillatory dynamics, allowing the model to capture periodic and quasi-periodic patterns in the input signal — a property that is particularly relevant for time series processing.

Motivated by these insights, ParalESN employs complex-valued parameterization. We validate this design choice by comparing (complex-valued) ParalESN against a corresponding real-valued variant, which we term \emph{rParalESN}. Tables~\ref{tab:memory_results_real_paralesn} and \ref{tab:forecasting_results_real_paralesn} present results on memory-based and forecasting tasks, respectively. Complex-valued ParalESN consistently outperforms rParalESN on the majority of tasks, sometimes by an order of magnitude.

\begin{table*}[ht!]
\caption{Comparison between complex-valued and real-valued ParalESN (\emph{rParalESN}) on memory-based tasks, assuming 128 recurrent neurons for each model. The \textbf{best result} is highlighted in bold.}
\label{tab:memory_results_real_paralesn}
\centering
\footnotesize
\begin{NiceTabular}{@{}l c c c c c@{}}
\toprule
\textsc{Memory-based} & \makecell{$\uparrow$ \textsc{MemCap}} & \makecell{$\cdot 10^{-1}$ \\ $\downarrow$ \textsc{ctXOR5}} & \makecell{$\cdot 10^{-1}$ \\ $\downarrow$ \textsc{ctXOR10}} & \makecell{$\cdot 10^{-1}$ \\ $\downarrow$ \textsc{SinMem10}} & \makecell{$\cdot 10^{-1}$ \\ $\downarrow$ \textsc{SinMem20}} \\
\midrule
\rowcolor{baseline} rParalESN & $27.8_{\pm 0.3}$ & $8.4_{\pm 0.2}$ & $10.1_{\pm 0.1}$ & $4.3_{\pm 0.1}$ & $8.7_{\pm 0.1}$ \\
\rowcolor{baseline} rParalESN (deep) & $29.7_{\pm 0.6}$ & $4.7_{\pm 0.6}$ & $9.2_{\pm 0.1}$ & $6.7_{\pm 0.3}$ & $8.9_{\pm 0.1}$ \\
\midrule
\rowcolor{our} ParalESN & $114.5_{\pm 1.4}$ & $3.9_{\pm 0.1}$ & $8.2_{\pm 0.2}$ & $3.7_{\pm 0.0}$ & $3.7_{\pm 0.0}$ \\
\rowcolor{our} ParalESN (deep) & $\mathbf{125.0_{\pm 0.2}}$ & $\mathbf{3.6_{\pm 0.1}}$ & $\mathbf{5.6_{\pm 0.4}}$ & $\mathbf{1.0_{\pm 0.2}}$ & $\mathbf{2.5_{\pm 0.4}}$ \\
\bottomrule
\end{NiceTabular}
\end{table*}

\begin{table*}[ht!]
\caption{Comparison between complex-valued and real-valued ParalESN (\emph{rParalESN}) on forecasting tasks, assuming 128 recurrent neurons for each model. The \textbf{best result} is highlighted in bold.}
\label{tab:forecasting_results_real_paralesn}
\centering
\resizebox{\textwidth}{!}{
\footnotesize
\begin{NiceTabular}{@{}l c c c c c c c c c c@{}}
\toprule
\textsc{Forecasting} & \makecell{$\cdot 10^{-2}$ \\ $\downarrow$ \textsc{Lz25}} & \makecell{$\cdot 10^{-2}$ \\ $\downarrow$ \textsc{Lz50}} & \makecell{$\cdot 10^{-4}$ \\ $\downarrow$ \textsc{MG}} & \makecell{$\cdot 10^{-2}$ \\ $\downarrow$ \textsc{MG84}} & \makecell{$\cdot 10^{-2}$ \\ $\downarrow$ \textsc{N10}} & \makecell{$\cdot 10^{-2}$ \\ $\downarrow$ \textsc{N30}} & \makecell{$\cdot 10^{-1}$ \\ $\downarrow$ \textsc{ETTh1}} & \makecell{$\cdot 10^{-1}$ \\ $\downarrow$ \textsc{ETTh2}} & \makecell{$\cdot 10^{-1}$ \\ $\downarrow$ \textsc{ETTm1}} & \makecell{$\cdot 10^{-1}$ \\ $\downarrow$ \textsc{ETTm2}} \\
\midrule
\rowcolor{baseline} rParalESN & $11.2_{\pm 0.4}$ & $\mathbf{29.3_{\pm 0.6}}$ & $3.8_{\pm 0.2}$ & $9.9_{\pm 1.1}$ & $10.5_{\pm 0.2}$ & $17.8_{\pm 0.1}$ & $\mathbf{8.8_{\pm 0.1}}$ & $11.3_{\pm 0.9}$ & $6.6_{\pm 0.1}$ & $5.2_{\pm 0.2}$ \\
\rowcolor{baseline} rParalESN (deep) & $11.2_{\pm 0.4}$ & $31.6_{\pm 0.5}$ & $3.7_{\pm 0.2}$ & $8.8_{\pm 0.7}$ & $11.8_{\pm 0.6}$ & $18.5_{\pm 0.1}$ & $9.0_{\pm 0.1}$ & $10.8_{\pm 0.7}$ & $6.6_{\pm 0.1}$ & $\mathbf{4.9_{\pm 0.1}}$ \\
\midrule
\rowcolor{our} ParalESN & $10.4_{\pm 0.5}$ & $30.2_{\pm 0.5}$ & $2.8_{\pm 0.2}$ & $7.4_{\pm 0.4}$ & $\mathbf{3.7_{\pm 0.8}}$ & $\mathbf{10.2_{\pm 0.1}}$ & $9.0_{\pm 0.2}$ & $12.8_{\pm 1.2}$ & $\mathbf{6.5_{\pm 0.1}}$ & $5.1_{\pm 0.1}$ \\
\rowcolor{our} ParalESN (deep) & $\mathbf{10.3_{\pm 0.3}}$ & $29.4_{\pm 0.3}$ & $\mathbf{2.6_{\pm 0.4}}$ & $\mathbf{5.2_{\pm 0.5}}$ & $4.5_{\pm 1.0}$ & $\mathbf{10.2_{\pm 0.2}}$ & $\mathbf{8.8_{\pm 0.1}}$ & $\mathbf{9.7_{\pm 0.9}}$ & $\mathbf{6.5_{\pm 0.0}}$ & $5.0_{\pm 0.0}$ \\
\bottomrule
\end{NiceTabular}
}
\end{table*}

\subsection{Comparison with Structured Reservoir Computing}
We compare ParalESN with other structured reservoir computing approaches, including the Simple Cycle Reservoir (SCR)~\citep{rodan2011minimum} and Structured Reservoir Computing (Structured RC)~\citep{dong2020reservoir}\footnote{Implementation from \href{https://github.com/rubenohana/Reservoir-computing-kernels}{github.com/rubenohana/Reservoir-computing-kernels}.}. For both SCR and Structured RC, model selection follows the methodology described in Appendix~\ref{sec:appendix_experimental_setting}. Note that SCR does not require tuning the spectral radius, since the transition matrix employs a fixed ring topology. For Structured RC, we tune the reservoir scaling over $\{0.01, 0.1, 1, 10\}$ rather than the spectral radius. Although traditional ESNs are not based on structured transforms, their results are included as a baseline.

Table~\ref{tab:comparison_structured_rc_forecasting} presents the results for a shallow, single-layer configuration of each model. ParalESN is consistently among the top-performing models alongside ESN, while other structured transform approaches consistently underperform across all datasets. In particular, compared to SCR and Structured RC, ParalESN achieves lower test error on MG, MG84, and N10 by an entire order of magnitude, and roughly half the error on N30.

%%%%%%%%%%%%%%%%%%%%%%%%%%%%%%
% TABLE: ParalESN vs structured RC
%%%%%%%%%%%%%%%%%%%%%%%%%%%%%%
\begin{table}[ht!]
\caption{Comparison of structured reservoir computing approaches on time series forecasting, assuming 128 recurrent neurons for each model. A traditional ESN is included as a baseline. The \textbf{best result} is highlighted in bold; \underline{second-best} is underlined.}
\label{tab:comparison_structured_rc_forecasting}
\centering
\footnotesize
\begin{NiceTabular}{@{}l c c c c c c@{}}
\toprule
\textsc{Forecasting} & \makecell{$\cdot 10^{-2}$ \\ $\downarrow$ \textsc{Lz25} } & \makecell{$\cdot 10^{-2}$ \\ $\downarrow$ \textsc{Lz50} } & \makecell{$\cdot 10^{-4}$ \\ $\downarrow$ \textsc{MG} } & \makecell{$\cdot 10^{-2}$ \\ $\downarrow$ \textsc{MG84} } & \makecell{$\cdot 10^{-2}$ \\ $\downarrow$ \textsc{N10} } & \makecell{$\cdot 10^{-2}$ \\ $\downarrow$ \textsc{N30} } \\
\midrule
\rowcolor{baseline} ESN & $\mathbf{10.0_{\pm 0.3}}$ & \underline{$30.8_{\pm 0.6}$} & \underline{$3.0_{\pm 0.0}$} & $\mathbf{6.5_{\pm 0.4}}$ & $\mathbf{2.7_{\pm 0.4}}$ & \underline{$10.3_{\pm 0.1}$} \\
\rowcolor{baseline} SCR & $22.1_{\pm 0.4}$ & $35.6_{\pm 0.8}$ & $18.7_{\pm 4.2}$ & $32.2_{\pm 3.1}$ & $11.1_{\pm 0.2}$ & $16.1_{\pm 0.9}$ \\
\rowcolor{baseline} Structured RC & \underline{$10.4_{\pm 0.2}$} & $30.9_{\pm 0.5}$ & $12.6_{\pm 0.1}$ & $28.5_{\pm 1.6}$ & $18.7_{\pm 0.1}$ & $20.9_{\pm 0.1}$ \\
\midrule
\rowcolor{our} ParalESN & \underline{$10.4_{\pm 0.5}$} & $\mathbf{30.2_{\pm 0.5}}$ & $\mathbf{2.8_{\pm 0.2}}$ & \underline{$7.4_{\pm 0.4}$} & \underline{$3.7_{\pm 0.8}$} & $\mathbf{10.2_{\pm 0.1}}$ \\
\bottomrule
\end{NiceTabular}
\end{table}

\subsection{Long-Range Temporal Modeling}
To evaluate the long-range temporal modeling capabilities of ParalESN, we consider a selection of Long Range Arena (LRA) benchmarks \cite{tay2021long}, including Image, ListOps, Text, and PathFinder.

\textbf{Discussion.} Table~\ref{tab:lra_results} shows that ParalESN (deep) consistently outperforms Transformer-based baselines across all benchmarks. While it does not consistently match the performance of SSM-based and RNN-based sequence models, ParalESN (deep) achieves competitive performance on the Text task and is broadly on par with RWKV overall, despite relying on untrained reservoir dynamics and requiring no backpropagation through time.

%%%%%%%%%%%%%%%%%%%%%%%%%%%%%%
% TABLE: Long Range Arena
%%%%%%%%%%%%%%%%%%%%%%%%%%%%%%
\begin{table*}[ht!]
\caption{Test set results on LRA benchmarks. The \textbf{best result} is highlighted in bold.}
\label{tab:lra_results}
\centering
\footnotesize
\begin{NiceTabular}{@{}l c c c c c @{}}
\toprule
\textsc{Long Range Arena} & $\uparrow$ \textsc{Image} & $\uparrow$ \textsc{ListOps} & $\uparrow$ \textsc{Text} & $\uparrow$ \textsc{PathFinder} & $\uparrow$ \textsc{Avg} \\
\midrule
\rowcolor{baseline} \textsc{Random} & $10.00$ & $10.00$ & $50.00$ & $50.00$ & $30.00$ \\
\midrule
\rowcolor{baseline} (\emph{Transformer-based}) & & & & & \\
\rowcolor{baseline} Transformer & $42.4$ & $36.4$ & $64.3$ & $71.4$ & $53.6$ \\
\rowcolor{baseline} Reformer & $38.1$ & $37.3$ & $56.1$ & $68.5$ & $50.0$ \\
\rowcolor{baseline} BigBird & $40.8$ & $36.1$ & $64.0$ & $74.9$ & $53.9$ \\
\rowcolor{baseline} Linear Trans. & $42.3$ & $16.1$ & $65.9$ & $75.3$ & $49.9$ \\
\rowcolor{baseline} Performer & $42.8$ & $18.0$ & $65.4$ & $77.1$ & $50.8$ \\
\midrule
\rowcolor{baseline} (\emph{SSM-based}) & & & & & \\
\rowcolor{baseline} S4 & $\mathbf{88.7}$ & $59.6$ & $86.8$ & $94.2$ & $82.3$ \\
\rowcolor{baseline} S5 & $88.0$ & $\mathbf{62.2}$ & $89.3$ & $\mathbf{95.3}$ & $\mathbf{84.7}$ \\
\rowcolor{baseline} LRU & - & $60.2$ & $\mathbf{89.4}$ & $95.1$ & $81.6$ \\
\midrule
\rowcolor{baseline} (\emph{RNN-based}) & & & & & \\
\rowcolor{baseline} RWKV & $70.5$ & $55.9$ & $86.0$ & $58.4$ & $67.7$ \\
\midrule
\rowcolor{our} (\emph{Ours}) & & & & & \\
\rowcolor{our} ParalESN (deep) & $57.2$ & $39.6$ & $80.2$ & $77.7$ & $63.7$ \\
\bottomrule
\end{NiceTabular}
\end{table*}

\end{document}